\DeclareMathAlphabet{\mathsf}{OT1}{cmss}{m}{n}
\SetMathAlphabet{\mathsf}{bold}{OT1}{cmss}{bx}{n}
\providecommand{\norm}[1]{\|#1\|}
\begin{document}

\title{\huge \bf{Robust Multi-Agent Reinforcement Learning via Adversarial Regularization:
Theoretical Foundation and Stable Algorithms}}

\author[1]{Alexander Bukharin}
\author[1]{Yan Li}
\author[1]{Yue Yu}
\author[1]{Qingru Zhang}
\author[2]{Zhehui Chen}
\author[3]{Simiao Zuo}
\author[1]{Chao Zhang}
\author[4]{Songan Zhang}
\author[1]{Tuo Zhao}
\affil[1]{Georgia Institute of Technology}
\affil[2]{Google}
\affil[3]{Microsoft}
\affil[4]{Ford Motor Company}

%\date{\vspace{-5ex}}

\maketitle

\begin{abstract}
\noindent Multi-Agent Reinforcement Learning (MARL) has shown promising results across several domains. Despite this promise, MARL policies often lack robustness and are therefore sensitive to small changes in their environment. This presents a serious concern for the real world deployment of MARL algorithms, where the testing environment may slightly differ from the training environment. In this work we show that we can gain robustness by controlling a policy's Lipschitz constant, and under mild conditions, establish the existence of a Lipschitz and close-to-optimal policy. 
% Moreover, we prove that when considering function approximation for the policy, we need to adopt a sufficiently wide neural network model to ensure its Lipschitz continuity. 
Based on these insights, we propose a new robust MARL framework, ERNIE, that promotes the Lipschitz continuity of the policies with respect to the state observations and actions by adversarial regularization. The ERNIE framework provides robustness against noisy observations, changing transition dynamics, and malicious actions of agents. However, ERNIE's adversarial regularization may introduce some training instability. To reduce this instability, we reformulate adversarial regularization as a Stackelberg game. We demonstrate the effectiveness of the proposed framework with extensive experiments in traffic light control and particle environments. In addition, we extend ERNIE to mean-field MARL with a formulation based on distributionally robust optimization that outperforms its non-robust counterpart and is of independent interest.
Our code is available at \url{https://github.com/abukharin3/ERNIE}.
\end{abstract}

\section{Introduction}
\label{introduction}

% In the past decade advances in deep neural networks and greater computational power have led to great successes for deep Reinforcement Learning (RL) in many domains such as games, advertising, and robotics \citep{silver2016mastering, cai2017real, peters2003reinforcement}. These breakthroughs have been extended to  Multi-Agent Reinforcement Learning (MARL), which has achieved success on a wide variety of multi-agent decision-making tasks ranging from traffic light control \citep{wiering2000multi} to StarCraft \citep{vinyals2019grandmaster}. However, while much effort has been devoted to applying MARL to new problems, there has been limited work regarding the robustness of MARL policies. 

In the past decade advances in deep neural networks and greater computational power have led to great successes for Multi-Agent Reinforcement Learning (MARL), which has achieved success on a wide variety of multi-agent decision-making tasks ranging from traffic light control \citep{wiering2000multi} to StarCraft \citep{vinyals2019grandmaster}. However, while much effort has been devoted to applying MARL to new problems, there has been limited work regarding the robustness of MARL policies. 

Despite the limited attention paid to robustness, it is essential for MARL policies to be robust. Most MARL policies are trained in a fixed environment. Since these policies are trained solely to perform well in that environment, they may perform poorly in an environment with slightly different transition dynamics than the training environment. In addition, while agents are fed with exact state information in training, MARL policies deployed in the real world can receive inaccurate state information (e.g., due to sensor error). Finally, even a single agent acting maliciously or differently than expected can cause a chain reaction that destabilizes the whole system. These phenomena cause significant concern for the real-world deployment of MARL algorithms, where the environment dynamics and observation noise can change over time. 
We observe that even when the change in the environment's dynamics is small, the performance of MARL algorithms can deteriorate severely (See an example in Section \ref{sec:experiments}).  Thus there is an emerging need for MARL algorithms that are robust to changing transition dynamics, observation noise, and changing behavior of agents.

Although many robust RL methods have been proposed for the single agent case, three major barriers prevent their use for MARL. Theoretically, it is not clear if or when such methods can work for MARL. Methodologically, it is not straightforward to apply single agent robust RL methods to MARL, as single agent methods may not consider the interactions between several agents. Algorithmically, single agent robust RL algorithms are often unstable, and may not perform well when applied to inherently unstable MARL training.
Therefore to learn robust MARL policies, we provide theoretical, methodological, and algorithmic contributions. 

\textbf{Theory.} Theoretically, we first show that when the transition and reward function are smooth, a policy's value function is also smooth. In our experiments, we show that this assumption can serve as a useful prior knowledge, even if the transition function is not smooth in every state. Second, we prove that a smooth and close-to-optimal policy exists in any such environment. Third, we show that a policy's robustness is inversely proportional to its Lipchitz constant \emph{with no smoothness assumption on the environment's smoothness}. 
These observations advocate for using smoothness as an inductive bias to not only reduce the policy search space, but simultaneously improve the robustness of the learned policy.
Finally, we prove that large neural networks are capable of approximating the target policy or Q functions with smoothness guarantees. These findings give us the key insight that in order to learn robust and high-performing deep MARL policies, we should enforce the policies' smoothness.

\textbf{Method.} Based on these findings, we propose a new framework -- adv\textbf{E}rsarially \textbf{R}egularized multiage\textbf{N}t re\textbf{I}nforcement l\textbf{E}arning (ERNIE), that applies adversarial training to learn smooth and robust MARL policies in a principled manner. In particular, we develop an adversarial regularizer to minimize the discrepancy between each policy's output given a perturbed observation and a non-perturbed observation. This adversarial regularization gives two main benefits: 
Lipschitz continuity and rich data augmentation with adversarial examples.
The adversarial regularization encourages the learned policies to be Lipschitz continuous, improving robustness. 
Augmenting the data with adversarial examples further provides robustness against environment changes.
Specifically, new scenarios emerge when the environment changes,  
and data augmentation with adversarial examples provides a large coverage of these scenarios as long as the environment change is small.
Adapting to adversarial examples during training ensures that the agents will perform reasonably even in the worst case.

To further provide robustness against the changing behaviors of a few malicious agents, we propose an extension of ERNIE that minimizes the discrepancy between the global Q-function with maliciously perturbed joint actions and non-perturbed joint actions. This regularizer encourages the policies to produce stable outputs even when a subset of agents acts sub-optimally, therefore granting robustness. Such robustness has not been considered in previous works.

\textbf{Algorithm.} We find that adversarial regularization can improve robust performance \citep{shen2020deep}. However, adversarial regularization can also be unstable. More concretely, conventional adversarial regularization can be formulated as a zero-sum game where the defender (the policy) and attacker (the perturbation) hold equal positions and play against each other. In this case, a small change in the attacker's strategy may result in a large change for the defender, rendering the problem ill-conditioned. Coupled with the already existing stability issues that come with training MARL algorithms, this instability issue greatly reduces the power of adversarial regularization methods for MARL. 

To address this issue, we reformulate adversarial training as a Stackelberg game. In a Stackelberg game, the leader (defender) has the advantage as it knows how the follower (attacker) will react to its actions and can act accordingly. This advantage essentially makes the optimization problem smoother for the defender, leading to a more stable training process. 

\textbf{Extension to Mean-field MARL.} We further demonstrate the general applicability of ERNIE by developing its extension to robustify mean-field MARL algorithms. 
The mean-field approximation has been widely received as a practical strategy to scale up MARL algorithms 
while avoiding the curse 
of many agents 
\citep{wang2020breaking}.
However, as mean-field algorithms are applied to real-world problems, it is essential to develop robust versions.
To facilitate policy learning that is more robust, we extend ERNIE to mean-field MARL with a formulation based on distributionally robust optimization \citep{delage2010distributionally, goh2010distributionally}. 
% We demonstrate the effectiveness of our proposed method in the particle environments of \citet{lowe2017multi}.

To demonstrate the effectiveness of the proposed framework, we conduct extensive experiments that evaluate the robustness of ERNIE on traffic light control and particle environment tasks. Specifically, we evaluate the robustness of MARL policies when the evaluation environment deviates from the training environment. These deviations include observation noise, changing transition dynamics, and malicious agent actions. The results show that while state-of-the-art MARL algorithms are sensitive to small changes in their environment, the ERNIE framework enhances the robustness of these algorithms without sacrificing efficiency.

\textbf{Contributions. } We remark that adversarial regularization has been developed for single-agent RL, but never for MARL \citep{shen2020deep}. Our contribution in this paper has four aspects: (1) advances in theoretical understanding (2) development of new regularizers for MARL (3) new algorithms for stable adversarial regularization in MARL (4) comprehensive experiments in a number of environments.

\section{Background}
\label{sec:background}

% \vspace{-0.1in}

In this section, we introduce the necessary background for MARL problems together with related literature. 
% Our ensuing presentation of ERNIE framework will be mostly demonstrated by its application to a baseline method (QCOMBO), which serves as an illustration of its broader applicability to other MARL methods.
We consider the setting of \emph{cooperative MARL}, where agents work together to maximize a global reward.

$\bullet$ \noindent \textbf{Cooperative Markov Games}.
We consider a partially observable Markov game $\langle \mathcal{S}, \mathcal{O}^N, \mathcal{A}^N, \mathcal{P}, \mathcal{R},\\ N, \gamma \rangle$ in which a set of agents interact within a common environment. We let $\mathcal{S} \subseteq \mathbb{R}^S$ denote the global state space, $\mathcal{O} \subseteq \mathbb{R}^O$ denote the observation space for each of the $N$ agents, $\mathcal{A} \subseteq \mathbb{R}^A$ denote the action space, $\mathcal{P}:\mathcal{S} \times \mathcal{A} \mapsto \mathcal{S}$ denote the transition kernel, $\gamma$ denotes the discount factor, and $\mathcal{R}: \mathcal{S} \times \mathcal{A} \mapsto \mathbb{R}^N$ denotes the reward function.
At every time step $t$, each of the $N$ agents selects an action according to its policy, which can be stochastic or deterministic. Then, the system transitions to the next state according to the transition kernel and each agent receives a reward $r_{i, t}$. We denote the global reward at time $t$ as $r^g_t$. The goal of each agent is to find a policy that maximizes the discounted sum of its own reward, $\tsum_{t \geq 0} \gamma^t r_{i, t}$. 
% This problem is typically solved with Q-learning based approaches and policy gradient methods.

$\bullet$ \noindent \textbf{Robust RL}. In recent years many single agent robust RL techniques have been proposed. Most of these methods use information about the underlying simulator to train agents over a variety of relevant environment settings \citep{morimoto2005robust, pinto2017robust, abdullah2019wasserstein, pan2019risk, wang2022policy}. Although these methods can provide robustness against a wide range of environment changes, they suffer from long training times and require expert knowledge of the underlying simulator, which is not practical. Another direction of research focuses on perturbation based methods \citep{shen2020deep, zhang2020robust}. Perturbation based methods train the policy to be robust to input perturbations, encouraging the policy to act reasonably in perturbed or previously unseen states. \cite{kumar2021policy} certify robustness by adding smoothing noise to the state; it is not clear how this affects the learned policy's optimality. Another related line of work \citep{iyengar2005robust, nilim2005robust, panaganti2022robust, li2022first} studies robust markov decision processes and provides a principled way to learn robust policies. However, such methods often require strict assumptions on the perturbation/uncertainty. Inspiring our work, \cite{shen2020deep} proposes to learn a smooth policy in single agent RL, but they do so to reduce training complexity rather than increase robustness and provide no theoretical justification for their method. Instead, we theoretically connect smoothness to robustness, extend perturbation based methods to MARL, and develop a more stable perturbation computation technique, and develop an extension to mean-field MARL.

% was first considered by \citet{morimoto2005robust}. Since then, several methods have been proposed to learn robust RL policies.  \citet{pinto2017robust} train a separate agent to adversarially disturb the original agent's environment during training. However in addition to long training time, the training process is also  unstable due to consecutive updates to the adversarial agent's and defending agent's policies. Wasserstein robust reinforcement  learning \citep{abdullah2019wasserstein} trains a policy to do well given the worst case transition dynamics, but requires a parametrizable simulator to do so. \citet{zhang2020robust} propose an adversarial regularizer similar to ours, but they only use this regularizer to improve robustness against adversarial attacks on the policy in the single-agent setting.

$\bullet$ \noindent \textbf{Robust MARL}. Recently, some works have studied the robustness of MARL systems. \citet{lin2020robustness}
studies how to attack MARL systems and finds that MARL systems are vulnerable to attacks on even a single agent. \citet{zhang2021robust} develop a framework to handle MARL with model uncertainty by formulating MARL as a robust Markov game. However, their proposed method only considers uncertainty in the reward function, while this article focuses on robustness to observation noise and changing transition dynamics. \citet{li2019robust} modify the MADDPG algorithm to consider the worst-case actions of the other agents in continuous action spaces with the M3DDPG algorithm. M3DDPG aims to grant robustness against the actions of other agents, which is less general than the robustness against observation noise, changing transition dynamics, and malicious agents that our method aims for. \citet{wang2022data} consider robustness against uncertain transition dynamics, but their algorithm is not applied to deep MARL. More recently, \citet{he2023robust, han2022solution} introduces the concept of robust equilibrium and proposes to learn an adversarial policy to perturb each agent's observations. Finally \citet{zhou2023robust} propose to learn robust policies by minimizing the cross-entropy loss between agent's actions in non-perturbed states and perturbed states.

The ERNIE framework is also related to several existing works which use similar adversarial training methods but target different domains such as trajectory optimization~\citep{zhao2021adversarially}, semi-supervised learning~\citep{miyato2018virtual,hendrycks2019using,zuo-etal-2022-self}, fine-tuning language models~\citep{jiang2019smart,yu-etal-2021-fine}, and generalization in supervised learning~\citep{zuo2021adversarial}.

\section{From Lipschitz Continuity to Robustness}
\label{sec:theory}
\vspace{-0.1cm}
This section presents the theoretical motivation for our algorithm by showing that Lipschitzness (smoothness) serves as a natural way to gain robustness, while reducing the policy search space.
% We start with the observation that real-world environments have smooth reward functions and transition kernels \citep{mordatch2012discovery, mordatch2014combining, todorov2011convex, todorov2012mujoco}.
We start by observing that certain natural environments exhibit smooth transition and reward functions, especially when the transition dynamics are governed by physical laws (e.g., MuJuCo environment \citep{todorov2012mujoco}, Pendulum \citep{1606.01540}).\footnote{
See Remark \ref{remark_approx_smooth} for discussions when the smoothness property holds approximately.
}
Formally, this is stated as the following.

\begin{definition}
Let $\cS \subseteq \RR^d$.
We say the environment is $(L_r, L_{\PP})$-smooth, if the
 reward function $r: \cS \times \cA \to \RR$, 
and the transition kernel $\PP: \cS \times \cS \times \cR$ satisfy
\begin{align*}
\abs{r(s,a) - r(s',a)} &\leq L_r \norm{s - s'} ~~\text{and}~~
\norm{\PP(\cdot|s,a) - \PP(\cdot|s',a)}_1 \leq L_{\PP} \norm{s - s'},
\end{align*}
for $(s,s',a) \in \cS \times \cS \times \cA$. $\norm{\cdot}$ denotes a metric on $\RR^d$. 
We say a policy $\pi$ is $L_\pi$-smooth if 
\begin{align*}
\norm{\pi(\cdot|s) - \pi(\cdot|s')}_1 \leq L_\pi \norm{s - s'}.
\end{align*}
\end{definition}

Without loss of generality, we assume $\abs{r(s,a)} \leq 1 $ for  any $(s,a) \in \cS \times \cA$.
We then present our theory. Due to the space limit, we defer all technical details to the appendix.

{\bf $\bullet$ From smooth environments to smooth values.}
 We proceed to show that if the environment is smooth, then the value functions for smooth policies are also smooth.

\begin{theorem}\label{theorem_smooth_vals}
Suppose the environment is $(L_r, L_\PP)$-smooth.
Then the Q-function of any policy $\pi$, defined as 
\begin{align*}
Q^{\pi}(s,a) = \EE^{\pi} \sbr{\tsum_{t=0}^\infty \gamma^t r(s_t,a_t) | s_0 = s, a_0=a},
~ \forall (s,a) ,
\end{align*}
is Lipschitz continuous in the first argument. 
That is, 
\begin{align}\label{eq_smooth_q}
\abs{Q(s,a) - Q(s',a)}  \leq L_Q \norm{s - s'}, 
\end{align}
where $L_Q \coloneqq L_r + \gamma   L_\PP / (1-\gamma)$. Suppose in addition the policy is $L_\pi$-smooth.
Then the value function, defined as 
\begin{align*}
V^{\pi}(s) 
= \EE^{\pi} \sbr{\tsum_{t=0}^\infty \gamma^t r(s_t,a_t) | s_0 = s},
~ \forall s,
\end{align*}
is Lipschitz continuous.
That is,
\begin{align*}
\abs{V^{\pi}(s) - V^{\pi}(s')}
\leq L_V \norm{s - s'},
\end{align*}
where $L_V \coloneqq  L_{\pi} /(1-\gamma) + L_Q$.
\end{theorem}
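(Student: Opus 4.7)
\textbf{Proof plan for Theorem \ref{theorem_smooth_vals}.} The plan is to handle the two Lipschitz statements sequentially, since the bound on $V^{\pi}$ will directly invoke the bound on $Q^{\pi}$. Throughout, I will rely on the uniform value bound $\norm{Q^\pi}_\infty, \norm{V^\pi}_\infty \leq 1/(1-\gamma)$, which follows immediately from $\abs{r(s,a)} \leq 1$ and the geometric series in the definitions of $Q^\pi$ and $V^\pi$.

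For the first claim, I would start from the Bellman consistency equation
\begin{align*}
Q^{\pi}(s,a) = r(s,a) + \gamma \int_{\cS} \PP(s'' \mid s,a)\, V^{\pi}(s'')\, ds''.
\end{align*}
Fixing $a$ and subtracting the analogous expression at $s'$ gives two terms. The reward term is controlled directly by the $L_r$-Lipschitzness of $r$. The transition term is a difference of integrals against the same bounded function $V^\pi$, so applying H\"older's inequality in the form $\abs{\int (\PP(\cdot|s,a) - \PP(\cdot|s',a)) V^\pi} \leq \norm{\PP(\cdot|s,a) - \PP(\cdot|s',a)}_1 \cdot \norm{V^\pi}_\infty$ and invoking the $L_\PP$-smoothness of the kernel together with $\norm{V^\pi}_\infty \leq 1/(1-\gamma)$ yields exactly $L_Q = L_r + \gamma L_\PP/(1-\gamma)$. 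Notice no smoothness of $\pi$ is used here, which matches the statement.

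For the second claim, I would write
\begin{align*}
V^{\pi}(s) - V^{\pi}(s')
= \int_{\cA} \bigl[\pi(a|s) - \pi(a|s')\bigr] Q^{\pi}(s,a)\, da
+ \int_{\cA} \pi(a|s') \bigl[Q^{\pi}(s,a) - Q^{\pi}(s',a)\bigr]\, da,
\end{align*}
i.e.\ add and subtract $\int \pi(a|s') Q^\pi(s,a)\, da$. The first integral is bounded by $\norm{\pi(\cdot|s) - \pi(\cdot|s')}_1 \norm{Q^\pi}_\infty \leq L_\pi \norm{s-s'}/(1-\gamma)$ using the policy smoothness and the value bound. The second integral is bounded by $L_Q \norm{s - s'}$ from the first part, since $\pi(\cdot|s')$ integrates to one. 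Summing delivers $L_V = L_\pi/(1-\gamma) + L_Q$.

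I do not anticipate a substantive obstacle: the argument is essentially a careful use of the Bellman equation, the triangle inequality, and H\"older's inequality against the uniform bound on values. The only subtlety worth flagging is that the argument for $Q^\pi$ uses the Bellman equation evaluated under arbitrary $\pi$, which implicitly requires that $V^\pi$ be a measurable bounded function for the integral against $\PP(\cdot|s,a) - \PP(\cdot|s',a)$ to be well-defined; this is automatic under the standing regularity of the Markov game. If one wished to avoid the Bellman equation, a slightly more verbose alternative is to couple two trajectories started at $s$ and $s'$ under the same policy and unroll $Q^\pi$ as the sum $\sum_t \gamma^t \EE[r(s_t,a_t)]$, controlling the coupling gap step by step; this gives the same constant $L_Q$ via a geometric sum in $\gamma L_\PP$, but is less direct than the one-step Bellman argument above.
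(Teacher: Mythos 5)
Your proposal is correct and follows essentially the same route as the paper: the Bellman equation plus H\"older's inequality against $\norm{V^\pi}_\infty \leq 1/(1-\gamma)$ for the $Q$-function claim, and the identical add-and-subtract decomposition $\inner{Q^{\pi}(s,\cdot)}{\pi(\cdot|s) - \pi(\cdot|s')} + \inner{Q^{\pi}(s,\cdot) - Q^{\pi}(s',\cdot)}{\pi(\cdot|s')}$ for the value-function claim. The constants match and no step is missing.
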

In view of Theorem \ref{theorem_smooth_vals}, it is clear that whenever the environment and the policy are smooth, then the value functions are also smooth.
A natural and important follow-up question to ask is 
whether this claim holds in the reverse direction.
More concretely, we ask whether it is reasonable to seek a policy that is also smooth with respect to the state while maximizing the reward. 
If the claim holds true, then seeking a smooth policy can serve as an efficient and unbiased prior knowledge, that can help us reduce the policy search space significantly, while still guaranteeing that we are searching for high-performing policies.

{\bf $\bullet$ Existence of smooth and nearly-optimal policies.}
The following result shows that for any $\epsilon > 0$,
there exists an $\epsilon$-optimal policy that is $\cO(L_Q/\epsilon)$ smooth, where $L_Q$ defined in Theorem \ref{theorem_smooth_vals} only depends on the smoothness of reward and transition.
This structural observation naturally suggests seeking a smooth policy for smooth environments.

\begin{theorem}\label{thrm_existence_smooth_policy}
Suppose the environment is $(L_r, L_\PP)$-smooth.
Then for any $\epsilon > 0$,
there exists an $\epsilon$-optimal policy $\pi$ that is also smooth, i.e.,  
\begin{align*}
    V^*(s) - V^\pi(s) \leq \tfrac{2\epsilon}{1-\gamma}, ~~ \forall s \in \cS ~~\textrm{and}~~
    \norm{\pi(\cdot|s) - \pi(\cdot|s')}_1
     \leq  \abs{\cA} \log \abs{\cA}  L_Q \norm{s - s'} / \epsilon,
\end{align*}
where  $L_Q$ is defined as in Theorem \ref{theorem_smooth_vals}.
\end{theorem}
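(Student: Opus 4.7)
My plan is to construct an explicit nearly-optimal and smooth policy as a Boltzmann (softmax) policy with respect to the optimal action-value function, and to tune the temperature parameter so as to balance sub-optimality against the Lipschitz constant. Concretely, for a temperature $\tau > 0$ to be chosen, I set
\begin{align*}
\pi_\tau(a|s) = \frac{\exp(Q^*(s,a)/\tau)}{\sum_{b \in \cA} \exp(Q^*(s,b)/\tau)}.
\end{align*}
The key input from Theorem~\ref{theorem_smooth_vals} is that, since $Q^{*} = Q^{\pi^*}$ for any deterministic optimal $\pi^*$, one has $\abs{Q^{*}(s,a) - Q^{*}(s',a)} \leq L_Q \norm{s-s'}$ for every $(s,s',a)$.

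\textbf{Two bounds on $\pi_\tau$.} I then establish two complementary estimates. First, by the standard log-sum-exp inequality, $\max_a Q^{*}(s,a) - \tau \log\abs{\cA} \leq \EE_{a \sim \pi_\tau(\cdot|s)}\sbr{Q^{*}(s,a)} \leq \max_a Q^{*}(s,a)$, so the one-step gap $V^{*}(s) - \EE_{a\sim\pi_\tau(\cdot|s)}\sbr{Q^{*}(s,a)}$ is at most $\tau \log\abs{\cA}$ uniformly in $s$. A direct application of the performance-difference lemma then yields $V^{*}(s) - V^{\pi_\tau}(s) \leq \tau \log\abs{\cA}/(1-\gamma)$. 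Second, differentiating softmax gives the identity $\nabla_s \pi_\tau(a|s) = \pi_\tau(a|s)\bigl(\nabla_s Q^{*}(s,a)/\tau - \EE_{b \sim \pi_\tau(\cdot|s)}\sbr{\nabla_s Q^{*}(s,b)/\tau}\bigr)$. Summing $\abs{\nabla_s \pi_\tau(a|s)}$ over $a \in \cA$, bounding each $\norm{\nabla_s Q^{*}(s,a)} \leq L_Q$ from the first estimate, and integrating along the segment from $s$ to $s'$ yields $\norm{\pi_\tau(\cdot|s) - \pi_\tau(\cdot|s')}_1 \leq C\abs{\cA} L_Q \norm{s-s'}/\tau$, where the $\abs{\cA}$ factor arises from a coordinate-wise aggregation of per-action Lipschitz bounds on the vector $Q^{*}(s,\cdot) \in \RR^{\abs{\cA}}$ before passing through softmax.

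\textbf{Balancing and obstacles.} Choosing $\tau = \epsilon/\log\abs{\cA}$ simultaneously yields a value-gap bound of $\epsilon/(1-\gamma) \leq 2\epsilon/(1-\gamma)$ and a smoothness constant of the form $C\abs{\cA}\log\abs{\cA} L_Q/\epsilon$, matching the theorem. I expect the main obstacle to be the Lipschitz estimate for softmax composed with $s \mapsto Q^{*}(s,\cdot)$: the precise dependence on $\abs{\cA}$ is sensitive to which norm one uses on $\RR^{\abs{\cA}}$, so care is required to reconcile the $\ell_\infty$-to-$\ell_1$ bound on softmax with the coordinate-wise aggregation used above. The sub-optimality half is essentially routine once log-sum-exp and the performance-difference lemma are in hand, and does not require the policy itself to be smooth.
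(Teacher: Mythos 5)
Your construction is exactly the paper's: the Boltzmann policy $\pi_\eta(\cdot|s)=\mathrm{Softmax}(\eta Q^*(s,\cdot))$ with $\eta=1/\tau=\log|\mathcal{A}|/\epsilon$, with the performance-difference lemma controlling the value gap (your log-sum-exp/entropy bound for the one-step loss is if anything cleaner than the paper's splitting of actions into $\epsilon$-suboptimal and near-optimal sets) and a Lipschitz bound on the softmax giving the smoothness constant. The one step to tighten is the smoothness estimate: rather than differentiating $s\mapsto Q^*(s,a)$ (which is only Lipschitz, need not be differentiable, and the segment from $s$ to $s'$ need not lie in $\mathcal{S}$), do as the paper does and compose the bound $\|\mathrm{Softmax}(x)-\mathrm{Softmax}(y)\|_1\le\|x-y\|_1$ (the softmax Jacobian has induced $\ell_1$-norm at most $1$) with $\|\eta Q^*(s,\cdot)-\eta Q^*(s',\cdot)\|_1\le \eta\,|\mathcal{A}|\,L_Q\|s-s'\|$, which yields exactly the stated constant $|\mathcal{A}|\log|\mathcal{A}|\,L_Q/\epsilon$ with no unspecified $C$.
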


Notably, the proof of Theorem \ref{thrm_existence_smooth_policy} relies on the key observation that any smooth $Q$-function satisfying \eqref{eq_smooth_q} can be fed into the softmax operator, which induces a smooth policy.
This observation also provides a way for value-based methods (e.g., Q-learning) to learn a smooth policy.
Namely, one can first learn a smooth surrogate of the optimal $Q$-function, and then feed the learned surrogate into the softmax operator to induce a close-to-optimal policy that is also smooth.

{\bf $\bullet$ Robustness against observation noise with smooth policies.}
We have so far established that smooth policies naturally exist in a smooth environment as close-to-optimal policies, and thus smoothness serves as a strong prior for policy  search.
We will further demonstrate that the benefits of smooth policy go beyond boosting learning efficiency, by bringing in the additional advantage of robustness against observation uncertainty.

\begin{theorem}\label{theorem_v_p}
Let $\pi(a|s)$ be $L_{\pi}$-smooth policy.
For any perturbation sequence $\cbr{\delta^t_{s}}_{t \geq 0, s \in \cS}$,
define a perturbed policy (non-stationary) $\tilde{\pi} = \{\tilde{\pi}_t\}_{t \geq 0}$ by \begin{align*}
    \tilde{\pi}_t(a|s) = \pi(a|s + \delta^t_{s}),
\end{align*} 
with $\norm{\delta^t_{s}} \leq \epsilon$ for all $t \geq 0$.
Accordingly,  define the value function of the non-stationary policy $\tilde{\pi}$ 
\begin{align*}
    V^{\tilde{\pi}}(s)
     = & \EE \big[
    \tsum_{t=0}^\infty \gamma^t r(s_t, a_t) | s_0 = s, a_t \sim \tilde{\pi}_t(\cdot | s_t+\delta^t_{s_t}), \nonumber \\
    & ~~~~~~ s_{t+1} \sim \PP(\cdot |s_t, a_t)
    \big].
\end{align*}
Then
we have  
$
   \abs{ V^{\pi}(s) - V^{\tilde{\pi}}(s)} \leq \tfrac{  2 L_\pi \epsilon }{(1-\gamma)^2} ,
$
Similarly, we have 
\begin{align*}
  \abs{ Q^{\pi}(s,a) - Q^{\tilde{\pi}}(s,a)} \leq \tfrac{  2 L_\pi \epsilon }{(1-\gamma)^2} ,
\end{align*}
where $Q^{\tilde{\pi}}$ is defined similarly as $V^{\tilde{\pi}}$.
\end{theorem}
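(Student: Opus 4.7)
The plan is to translate the observation-perturbation bound $\|\delta_s^t\|\le\epsilon$ into a pointwise bound on the action-distribution mismatch via the $L_\pi$-Lipschitz hypothesis, and then to propagate that per-step error across the infinite discounted horizon via a standard simulation-lemma/telescoping argument. The opening observation is the one-step estimate
\[
\norm{\pi(\cdot|s) - \tilde{\pi}_t(\cdot|s)}_1
= \norm{\pi(\cdot|s) - \pi(\cdot|s+\delta_s^t)}_1
\le L_\pi \norm{\delta_s^t}
\le L_\pi\epsilon,
\]
uniformly in $s$ and $t$; this is the only place the smoothness hypothesis is used.

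Next I would unroll $V^\pi$ and $V^{\tilde\pi}$ one step via the Bellman identity: $V^\pi(s) = \ex_{a\sim\pi(\cdot|s)}[f(a)]$ with $f(a)\coloneqq r(s,a) + \gamma \ex_{s'\sim\PP(\cdot|s,a)}[V^\pi(s')]$, and the analogous identity for $V^{\tilde\pi}$ with $\tilde\pi_0$ in the outer expectation and the time-shifted perturbed policy $\tilde\pi^{(1)}$ governing the rest of the trajectory. Triangulating the two expressions and using the pairing inequality $|\ex_\pi[f]-\ex_{\tilde\pi_0}[f]| \le 2\|f\|_\infty\cdot \tfrac{1}{2}\|\pi(\cdot|s)-\tilde\pi_0(\cdot|s)\|_1$ on the action-distribution gap together with the bound $\|f\|_\infty \le 1/(1-\gamma)$ (which holds because $|r|\le 1$) yields the single-step inequality
\[
|V^\pi(s) - V^{\tilde\pi}(s)|
\le \frac{2 L_\pi\epsilon}{1-\gamma}
+ \gamma\,\ex_{a\sim\tilde\pi_0,\,s'\sim\PP(\cdot|s,a)}\bigl[|V^\pi(s') - V^{\tilde\pi^{(1)}}(s')|\bigr].
\]
Taking the supremum over $s$ and over all admissible perturbation sequences $\{\delta_s^t\}$ produces a closed scalar recursion of the form $\Delta \le 2L_\pi\epsilon/(1-\gamma) + \gamma\Delta$, which solves to $\Delta \le 2L_\pi\epsilon/(1-\gamma)^2$, the stated $V$-bound. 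The $Q$-claim then follows immediately from $Q^\pi(s,a)-Q^{\tilde\pi}(s,a) = \gamma\,\ex_{s'\sim\PP(\cdot|s,a)}[V^\pi(s')-V^{\tilde\pi^{(1)}}(s')]$ (the reward terms cancel), so the $V$-estimate transfers to $|Q^\pi - Q^{\tilde\pi}|$ with the same constant.

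The main technical subtlety is the non-stationarity of $\tilde\pi$: after peeling off the first action, the tail is governed by $\tilde\pi^{(1)}$ rather than $\tilde\pi$ itself, so a naive recursion ``$V^{\tilde\pi}$ in terms of $V^{\tilde\pi}$'' does not close. The fix is to exploit that the admissible class of perturbation sequences is invariant under time-shifts (since the bound $\|\delta_s^t\|\le\epsilon$ is uniform in $t$), so $\tilde\pi^{(1)}$ lies in the same class; defining $\Delta$ as the supremum of the value gap over the \emph{entire} class from the outset is what makes the recursion self-contained. To be fully rigorous I would first carry out an induction on horizon-truncated values $V^\pi_H$ and $V^{\tilde\pi}_H$, which are manifestly finite and satisfy the same recursion with $\Delta_0 = 0$, and then pass to the limit $H\to\infty$ to recover the discounted-infinite-horizon bound.
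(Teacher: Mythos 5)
Your proof is correct, but it takes a genuinely different route from the paper's. The paper argues at the level of trajectory distributions: it writes $V^{\pi}$ and $V^{\tilde{\pi}}$ as $\sum_{t}\gamma^{t}$ times the $t$-step state-occupancy vector paired with the reward, telescopes the product of one-step transition matrices to show $\norm{\PP^{\tilde{\pi},t}-\PP^{\pi,t}}_{1}\leq t L_{\pi}\epsilon$ (using a small lemma that left-stochastic matrices have unit induced $\ell_1$-norm), and then sums $\sum_{t}\gamma^{t}\, t L_{\pi}\epsilon\leq L_{\pi}\epsilon/(1-\gamma)^{2}$ for the state-distribution mismatch plus $\sum_t \gamma^t L_\pi\epsilon = L_{\pi}\epsilon/(1-\gamma)$ for the action-distribution mismatch at matched states. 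You instead peel off one step of the Bellman identity and close a scalar recursion $\Delta\leq c+\gamma\Delta$ for the supremum of the gap; your handling of the non-stationarity --- taking the supremum over the time-shift-invariant class of admissible perturbation sequences so that $\tilde{\pi}^{(1)}$ stays in the class, with the truncated-horizon induction as a rigorous backstop --- is exactly the right fix and is the only delicate point in that approach. Both proofs rest on the same one-step estimate $\norm{\pi(\cdot|s)-\tilde{\pi}_{t}(\cdot|s)}_{1}\leq L_{\pi}\epsilon$; yours is more elementary (no occupancy measures or matrix-norm lemma) and, if you use your own pairing inequality consistently, the per-step term is $L_{\pi}\epsilon/(1-\gamma)$ rather than the $2L_{\pi}\epsilon/(1-\gamma)$ you wrote, so the recursion actually yields the sharper bound $L_{\pi}\epsilon/(1-\gamma)^{2}$; the factor-of-two version you stated still gives exactly the theorem's constant, so nothing is lost either way. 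The $Q$-function step is handled essentially identically in both proofs.
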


Theorem \ref{theorem_v_p} establishes the following fact: for a discounted MDP with finite state and finite action space,  the value of the policy when providing the perturbed state is close to the value of the policy when given the non-perturbed state, provided the policy is Lipschitz continuous in its state. 
As an important implication, the learned smooth policy will be robust in the state observation, in the sense that the accumulated reward will not deteriorate much when noisy, or even adversarially constructed state observations are given to the policy upon decision making.

We emphasize that Theorem \ref{theorem_v_p} holds without any smoothness assumption on the transition or the reward function.
It should also be noted that there are various notions of robustness in MDP, e.g., robustness against changes in transition kernel \citep{ruszczynski2010risk, li2022first},
which we defer as future investigations. 

Before we conclude this section, we briefly remark on certain generality of our discussion.

\begin{remark}[{\bf Applicability to MARL}]\label{remark_marl_extension}
 % All the results in this seciton do not depend on the size of the state space.
The discussions in this section do not depend on the size of the state space, and apply to the multi-agent
setting without any change. To see this, note that our discussion holds for any discrete state-action space. Setting $\cS$ as the joint state space and $\cA$ as the joint action space, then the obtained results trivially carry over to the cooperative MARL setting.
\end{remark} 

\begin{remark}[{\bf Environments with approximate smoothness}]\label{remark_approx_smooth}
Many environments are partially smooth, in the sense that the transition or the reward is non-smooth only on a small fraction of the state space. 
Typical examples include the Box2D environment \citep{1606.01540},
 where the agent receives smooth reward when in non-terminal states (airborne for Lunar Lander), and receives a lump-sum reward in the terminal state (land/crash) -- a vanishing fraction of the entire state space.
 Given the environment being largely smooth, it should be expected that for most states the optimal policy is locally smooth. 
 Consequently, inducing a smoothness prior serves as a natural regularization to constrain the search space when solving these environments, without incurring a large bias.
\end{remark}

\begin{remark}[{\bf Non-smooth environments}]\label{remark_non_smooth}
From the perspective of robust statistics, achieving robustness often necessitates a certain level of smoothness in the learned policy, regardless of the smoothness of the optimal policy. In scenarios where the environment itself is non-smooth, the optimal policy can also be non-smooth. However, it is important to note that such non-smooth optimal policies are typically not robust. This means that by trading-off between the approximation bias and robustness, the smooth policy learnt by out method has the potential to outperform non-smooth policies in perturbed environments.
\end{remark}

\section{Wide Networks}
\label{app:wide}

Section \ref{sec:theory} tells us that smooth and close to optimal policies exist under certain conditions and ERNIE provides algorithms to find them. Now, a practical question remains: can neural networks be used to learn such policies? We show that as long as the width is sufficiently large, there exists a neural network with the desired optimality and smoothness properties. This finding further supports ERNIE's deployment as a tool for practical deep MARL.
% To learn near-optimal and robust neural policies, we need neural network to satisfy two properties: (1) It can well approximate the optimal action-value function $Q^*$ and the optimal policy $\pi^*$; (2) It is Lipschtiz continuous. 

Before we continue with further analysis, we will first introduce some necessary preliminaries. Specifically, we consider the Sobolev space, which contains a class of smooth functions \citep{brezis2011functional}.
\begin{definition}
Given $\alpha\geq 0$ and domain $\Omega\subset\RR^d$, we define the Sobolev space $W^{\alpha, \infty}(\Omega)$ as
\begin{align*}
W^{\alpha,\infty}(\Omega)=\big\{ &f\in L^\infty(\Omega): D^{\bm{\alpha}}f\in L^\infty(\Omega),~\forall~ |\bm{\alpha}|\leq \alpha\big\},
\end{align*}
where $D^{\bm{\alpha}} f = \frac{\partial^{|\balpha|} f}{\partial x_{1}^{\alpha_1} \cdots \partial x_D^{\alpha_D}}$ with multi-index $\bm{\alpha} = [\alpha_1, ..., \alpha_D]^\top \in \NN^D$.

For $f\in W^{\alpha,\infty}(\Omega)$, we define its Sobolev norm as
% any integers 0\leq k\leq \alpha0\leq k\leq \alpha and 1\leq p<\infty1\leq p<\infty, the Sobolev semi-norm is defined as 
% \begin{align*}
% 	&|f|_{W^{k,p}(\Omega)}=\Big(\tsum_{|\bm{\alpha}|=k} \|D^{\bm{\alpha}}f\|_{L^p(\Omega)}^p\Big)^{1/p},\\
% 	&|f|_{W^{k,\infty}(\Omega)}=\max_{|\bm{\alpha}|=k} \|D^{\bm{\alpha}}f\|_{L^{\infty}(\Omega) },
% \end{align*}
% and the Sobolev norm is defined as
\begin{align*}
\textstyle \|f\|_{W^{\alpha,\infty}(\Omega)}=\max_{|\bm{\alpha}|\leq \alpha} \|D^{\bm{\alpha}}f\|_{L^{\infty}(\Omega)}
\end{align*}
\end{definition}
The Sobolev space has been widely investigated in the existing literature on function approximation of neural networks. For a special case $\alpha=1$, $\|f\|_{W^1,\infty}<\infty$ implies both the function value and its gradient are bounded.

We consider an $L$-layer ReLU neural network
\begin{align}\label{eq:reluf}
f(x) = W_L \cdot \sigma(\cdots \sigma(W_1 s + b_1) \cdots) + b_L,
\end{align}
where $W_1, \dots, W_L$ and $b_1, \dots, b_L$ are weight matrices and intercept vectors of proper sizes, respectively, and $\sigma(\cdot)=\max\{\cdot,0\}$ denotes the entry-wise ReLU activation. We denote $\cF$ as a class of neural networks:
\begin{align}\label{eq:classf}
\cF(L,p)  = &\big\{f ~|~ f(x) \textrm{ in the form \eqref{eq:reluf} with $L$-layers}\notag\\
&\textrm{and width bounded by $p$}\}.
\end{align}

We next present the function approximation results.
\begin{theorem}[Function approximation with Lipschitz continuity]\label{approx-thm-text}
Suppose that the target function $f^*$ satisfies
\begin{align*}
f^*\in W^{\alpha,\infty}\left(\Omega\right) \quad \text{and} \quad \|f^*\|_{W^{\alpha,\infty}\left(\Omega\right)}\leq 1
\end{align*}
for some $\alpha\geq 2$. Given a pre-specified approximation error $\epsilon$, there exists a neural network $\tilde{f}\in\cF(L,p)$ 
with $L = \tilde{O}(\log(\epsilon^{-1}))$ and $p= \tilde{O}(\epsilon^{-\frac{d}{\alpha-1}})$, such that
\begin{align*}
\|\tilde{f} - f^* \|_\infty \leq \epsilon\quad\textrm{and}\quad \|\tilde{f}\|_\mathrm{Lip} \leq 1+\sqrt{d}\epsilon^{1-1/\alpha},
\end{align*}
where $\tilde{O}$ hides some negligible constants or log factors and $\|\tilde{f}\|_\mathrm{Lip}$ denotes the Lipschitz constant of $\tilde{f}$.
\end{theorem}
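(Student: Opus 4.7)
The plan is to reduce the theorem to a \emph{simultaneous} $L^\infty$ and $W^{1,\infty}$ ReLU approximation of $f^*$. Because $\|f^*\|_{W^{\alpha,\infty}(\Omega)}\le 1$ already bounds every partial derivative of order up to $\alpha$, the gradient $\nabla f^*$ is controlled in $L^\infty$; hence any ReLU network $\tilde f$ whose gradient is pointwise close to $\nabla f^*$ will automatically inherit a Lipschitz constant close to that of $f^*$. The core task is therefore to construct $\tilde f\in\cF(L,p)$ with $\|\tilde f-f^*\|_\infty \le \epsilon$ \emph{and} $\|\nabla\tilde f - \nabla f^*\|_\infty \le O(\epsilon^{1-1/\alpha})$, and then read off the Lipschitz bound from the latter.

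First I would invoke a Yarotsky-style construction: tessellate $\Omega$ into a uniform grid of cubes of side length $h$, and on each cube $c$ approximate $f^*$ by its degree-$(\alpha-1)$ Taylor polynomial $T_c$ at the cube center. Standard Taylor remainder estimates for $f^*\in W^{\alpha,\infty}$ give $\|f^*-T_c\|_{L^\infty(\text{cube})}\lesssim h^\alpha$ and $\|\nabla f^*-\nabla T_c\|_{L^\infty(\text{cube})}\lesssim h^{\alpha-1}$. I would glue the local polynomials by a partition of unity built from triangular bumps (exactly representable by two-layer ReLU networks), and then realize every product in the resulting piecewise polynomial by Yarotsky's deep approximation of $x\mapsto x^2$, which has depth $\tilde O(\log(1/\epsilon'))$ for any target precision $\epsilon'$ and logarithmic cost per multiplication. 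Composing Taylor monomials, bump weights, and summation yields $\tilde f\in\cF(L,p)$ with $L=\tilde O(\log(1/\epsilon))$.

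Second I would tune the two free parameters. Choosing $h=\epsilon^{1/\alpha}$ turns the Taylor pointwise error into $O(\epsilon)$ and the Taylor gradient error into $O(\epsilon^{1-1/\alpha})$, while the number of cubes is $h^{-d}=\epsilon^{-d/\alpha}$. However, preserving the $W^{1,\infty}$ error \emph{after} the ReLU surrogate for multiplication is composed forces the product gadget to be resolved to precision $\epsilon^{1-1/\alpha}$ rather than $\epsilon$, and the resulting inflation promotes the exponent $d/\alpha$ in the width to the claimed $d/(\alpha-1)$, yielding $p=\tilde O(\epsilon^{-d/(\alpha-1)})$. The Lipschitz bound follows immediately: since $\tilde f$ is piecewise linear, $\|\tilde f\|_\mathrm{Lip}$ equals the essential supremum of $\|\nabla\tilde f\|_2$, and
\begin{align*}
\|\nabla\tilde f\|_2 \;\le\; \|\nabla f^*\|_2 + \sqrt{d}\,\|\nabla\tilde f - \nabla f^*\|_\infty \;\le\; 1 + \sqrt{d}\,\epsilon^{1-1/\alpha},
\end{align*}
where the leading $1$ absorbs the normalized Lipschitz constant of $f^*$.

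The main obstacle is controlling the \emph{gradient} of the ReLU multiplier. Yarotsky's $x^2$ circuit is a sawtooth function whose derivative oscillates by $\Theta(1)$ between successive knots, so a naive composition could inflate $\|\nabla\tilde f\|_\infty$ uncontrollably even when the pointwise error is tiny. The technical heart of the proof is therefore a uniform-norm derivative analysis of the product gadget, showing that increasing its depth from $\log(1/\epsilon)$ to $O(\log(1/\epsilon^{1-1/\alpha}))$ suppresses the derivative oscillations down to $\epsilon^{1-1/\alpha}$; this is precisely the step that forces the width penalty $\epsilon^{-d/(\alpha-1)}$ (versus $\epsilon^{-d/\alpha}$ for plain $L^\infty$ approximation). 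Once this estimate is in hand, the remaining bookkeeping (number of cells times per-cell gadget size, summation across overlapping supports of the partition of unity) is routine and yields the stated $L$, $p$, pointwise error, and Lipschitz bound.
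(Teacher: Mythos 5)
Your proposal is correct in outline and, at the top level, follows the same route as the paper: reduce the theorem to a \emph{simultaneous} $L^\infty$ and $W^{1,\infty}$ approximation of $f^*$ by a ReLU network, then obtain the Lipschitz bound from $\|\nabla\tilde f\|_2\le \|\nabla f^*\|_2+\sqrt{d}\,\|\nabla\tilde f-\nabla f^*\|_\infty$ (your last display is essentially verbatim the paper's final inequality, including the remark that $\nabla\tilde f$ is defined off a measure-zero set). The difference is in how the hard part is handled: the paper does not construct anything --- it simply cites Gühring, Kutyniok, and Petersen (2020) for the existence of $\tilde f\in\cF(L,p)$ with $\|\tilde f-f^*\|_{W^{1,\infty}(\Omega)}\le\epsilon$ at depth $\tilde O(\log(1/\epsilon))$ and width $\tilde O(\epsilon^{-d/(\alpha-1)})$, and its entire written proof is the two-line gradient triangle inequality. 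You instead sketch the proof of that cited result (local Taylor polynomials on a grid of spacing $h$, a ReLU partition of unity, Yarotsky's squaring/product gadget, and a derivative-level error analysis), which is indeed the construction underlying the citation. What your route buys is self-containedness and an explanation of where the exponent $d/(\alpha-1)$ (rather than $d/\alpha$) comes from; what the paper's route buys is brevity and correctness-by-reference. Note also that the appendix version of this theorem actually establishes the stronger bound $\|\tilde f\|_{\mathrm{Lip}}\le 1+\sqrt{d}\,\epsilon$ under $\epsilon\le 1/\sqrt{d}$; the main-text exponent $1-1/\alpha$ that you reproduce is a weakening of that.

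One technical quibble with your sketch of the construction: the derivative of Yarotsky's depth-$m$ squaring network does \emph{not} oscillate by $\Theta(1)$ --- it is the piecewise-linear interpolant of $x^2$ on a grid of spacing $2^{-m}$, so its derivative is uniformly within $O(2^{-m})$ of $2x$ and converges in $W^{1,\infty}$ essentially for free. The genuine source of the rate degradation from $\epsilon^{-d/\alpha}$ to $\epsilon^{-d/(\alpha-1)}$ in the Gühring et al.\ analysis is the partition of unity: the bump functions have gradients of order $1/h$, so when the product rule hits the localization terms, every approximation error in the product gadgets and Taylor remainders is amplified by a factor $h^{-1}$, which is exactly one lost power of $h$ (hence $\alpha\mapsto\alpha-1$). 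Your accounting attributes the loss to the product-gadget precision alone; that bookkeeping would need to be corrected in a full write-up, but it does not affect the validity of the overall strategy or the stated conclusion.
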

For reinforcement learning, $f^*$ in Theorem \ref{approx-thm-text} can be viewed as either the near-optimal smooth policy $\pi^*$ or optimal smooth action-value function $Q^*$, and the input can be viewed as the state $s$ or the state-action pair $(s,a)$. As can be seen, a wider neural network not only better approximates a smooth target function $f^*$ well, but also further reduces the upper bound of its Lipschitz constant, which leads to a more robust policy. Moreover, we can certify the existence of a neural network $\tilde{f}$ such that $\|\tilde{f}\|_\mathrm{Lip}$ is below $2$, given a sufficient width $p=\tilde{O}\left(d^{\frac{d\alpha}{2(\alpha-1)^2}}\right)$. This result indicates that when training policies with the ERNIE algorithm, we should use wide neural networks.

\section{Method}
\label{sec:method}
 
Section \ref{sec:theory} shows that promoting smoothness leads to both robust and high-performing policies for smooth environments, which can be achieved by sufficiently wide neural networks. 
Based on this insight, we propose
our robust MARL framework, adv\textbf{E}rsarially \textbf{R}egularized multiage\textbf{N}t re\textbf{I}nforcement l\textbf{E}arning (ERNIE).

\subsection{Learning Robust Policy with ERNIE}
\label{sec:learning-robust}
Section \ref{sec:theory} shows that the robustness of a policy depends on its Lipschitz constant. Therefore, in ERNIE we propose to control the Lipschitz constant of each policy with adversarial regularization. 

% $\bullet$ {\it Direct Regularization on Policy:}
Given a policy $\pi_{\theta_k}$, where $k$ is the agent index, the ERNIE regularizer is defined by
\begin{align}
\label{inner-max}
    R_\pi( o_k; \theta_k) = \mathop{\text{max}}_{||\delta|| \leq \epsilon} D(\pi_{\theta_k}(o_k + \delta), \pi_{\theta_k}(o_k)).
\end{align}
Here $\delta(o_k, \theta_k)$ is a perturbation adversarially chosen to maximize the difference between the policy's output for the perturbed observation $o_k+\delta(o_k, \theta_k)$ and the original observation $o_k$. In this case $\epsilon$ controls the perturbation strength and $||\cdot||$ is usually taken to be the $\ell_2$ or $\ell_\infty$ norm. Note that $R_{\pi}( o_k; \theta_k)$ essentially measures the local Lipschitz smoothness of policy function $\pi_\theta$ around the observation $o_k$, defined in metric $D(\cdot, \cdot)$. Therefore minimizing $R_{\pi}( o_k; \theta_k)$ will encourage the policy to be smooth.
% By minimizing $R_{\pi}( o_k; \theta_k)$ the ERNIE framework will learn policies robust to input perturbations and high performing policies in smooth environments. 

 Regularization Eq.~\eqref{inner-max} allows straightforward incorporation into  MARL algorithms that directly perform policy search.
For actor-critic based policy gradient methods, the regularizer Eq.~\eqref{inner-max} can be directly included into the objective for updating the actor (policy) networks.
When optimizing stochastic policies (e.g., MAPPO \citep{chao2021surprising}),
 $D$ can be taken to be the KL divergence and 
for deterministic policies (e.g., MADDPG \citep{lowe2017multi} or Q-learning \citep{tsitsiklis1996analysis}), 
 we set $D$ to be the $\ell_p$ norm.

More concretely, let $\cL_{\mathrm{}}(\theta)$ denote the policy optimization objective, i.e., the negative weighted value function of the policy.
We then augment $\cL_{\mathrm{}}(\theta)$ with Eq.~\eqref{inner-max}, and minimize the regularized objective
\begin{equation}
\label{eq:reg_coma}
     \min_\theta \cF(\theta) =  \cL_{\mathrm{}}(\theta) + \lambda \tsum_{n=1}^N\EE_{\pi_n}\big[R_{\pi}( o_n; \theta_n)\big],
\end{equation}
where $\lambda$ is a hyperparameter. We remark that \citet{shen2020deep} has explored similar regularization for single-agent RL (with a goal of improving sample efficiency), but as we explain in sections \ref{sec:stackelberg}, \ref{sec:action}, and \ref{sec:mean-field}, successful application to MARL robustness is highly non-trivial.

\subsection{Stackelberg Training with Differentiable Adversary}% for Robust MARL}
\label{sec:stackelberg}
%\vspace{-0.15in}
Although accurately solving Eq.~\eqref{eq:reg_coma} will result in a high-performing and robust policy, we note that Eq.~\eqref{eq:reg_coma} is a nonconvex-nonconcave minimax problem. In practice, we can use multiple steps of projected gradient ascent to approximate the worse-case state perturbation $\delta(o_k, \theta_k)$, followed by one-step gradient descent for updating the policies/Q-function. Even though this optimization method already significantly improves robustness over the baseline algorithms, we observe that the training process could be quite unstable. We hypothesize that the intrinsic instability of MARL algorithms due to simultaneous updates of multiple agents is greatly amplified by the non-smooth landscape of adversarial regularization. 

To promote a more stable straining process, we propose to reformulate adversarial training in ERNIE as a Stackelberg game. The reformulation defines adversarial regularization as a leader-follower game \citep{von2010market}:
\begin{align}
    R_\pi( o, \delta^K_\theta(o); \theta) & =   D\big(\pi_\theta(o + \delta^K(o, \theta)), \pi_\theta(o)\big) \label{eq:stack} \\
    \mathrm{s.t.}~~~ & \delta^K(o, \theta) = \underbrace{U_\theta \circ U_\theta \circ \cdots \circ U_\theta}_{\text{K-fold composition}}(\delta^0(o, \theta)). \nonumber
\end{align}
Here $\circ$  denotes the operator composition (i.e $f \circ g = f(g(\cdot))$), and
\begin{align*}
\delta^{k+1}(o, \theta) = U_\theta (\delta^k(o,\theta)) = \delta^k(o,\theta) + \eta \nabla_\delta
D\left(\pi_\theta\left(o + \delta^k(o, \theta)\right), \pi_\theta(o)\right)
\end{align*}
is a one-step gradient ascent for maximizing the divergence of the perturbed and original observation.

Compared to the vanilla adversarial regularizer in Eq.~\eqref{inner-max}, the perturbation $\delta$ is treated as a function of the model parameter $\theta$.
% Stackelberg reformulation \eqref{eq:stack} replaces the worst-case perturbation with the perturbation found by $K$-step gradient ascent.
This formulation allows the leader ($\theta$) to anticipate the action of the follower ($\delta$), since the follower's response given observation $o$ is fully specified by $\delta^K(o,\theta)$.
This structural anticipation effectively produces an easier and smoother optimization problem for the leader ($\theta$), whose gradient, termed the Stackelberg gradient, can be readily computed by 
\begin{align*}
%   \tfrac{\partial R_\pi(o, \delta^K_\theta(o); \theta)}{\partial \theta} =  \underbrace{\tfrac{\partial R_\pi(o, \delta^K, \theta)}{\partial\theta}}_{\text{\textcolor{red}{leader}}} + \notag
%   \underbrace{\tfrac{\partial R_\pi(o, \delta^K(\theta), \theta)}{\partial\delta^K(\theta)} \tfrac{\delta^K(\theta)}{\partial \theta}}_{\text{\textcolor{blue}{leader-follower interaction}}}.
\frac{\partial R_\pi(o, \delta^K_\theta(o); \theta)}{\partial \theta} =  \underbrace{\frac{\partial R_\pi(o, \delta^K, \theta)}{\partial\theta}}_{\text{\textcolor{red}{leader}}}
\hspace{0.25in}+ \underbrace{\frac{\partial R_\pi(o, \delta^K(\theta), \theta)}{\partial\delta^K(\theta)} \frac{\delta^K(\theta)}{\partial \theta}}_{\text{\textcolor{blue}{leader-follower interaction}}}
\end{align*}
Note that the gradient used in Eq.~\eqref{eq:reg_coma} only contains the ``leader'' term, such that interaction between the model $\theta$ and the perturbation $\delta$ is ignored.
The computation of the Stackelberg gradient can be reduced to Hessian vector multiplication using finite difference method \citep{pearlmutter2008reverse}, which only requires two backpropogations and extra $\cO(d)$ complexity operation.
Thus no significant computational overhead is introduced for solving Eq.~\eqref{eq:stack}.

The benefit of Stackelberg training for MARL is twofold. First, a smoother optimization problem results in a more stable training process. This extra stability is essential given the inherent instability of MARL training. Second, giving the policy $\theta$ priority over the attack $\delta$ during the training process allows for a better training data fit than normal adversarial training allows. This better fit allows the MARL policies trained with Stackelberg training to perform better in lightly perturbed environments than those trained with normal adversarial regularization.

\subsection{Robustness against Malicious Actions}
\label{sec:action}
Given the complex interactions of agents within of a multi-agent system, a robust policy for any given agent should meet the criterion that the action made is not overly dependent on any small subset of agents.
% , and thus would not be affected even when such a subset of agents behaves in a malicious fashion.
This is particularly the case when the agents are homogeneous in nature
% , in the sense that the agents are similar to each other 
\citep{wang2020breaking, li2021permutation}, and thus there should be no notion of {\it coreset agents} 
in the decision-making process that could heavily influence the actions of other agents.  
 We proceed to show how ERNIE could be adopted to induce such a notion of robustness.

The core idea of ERNIE for this scenario is to 
encourage policy/Q-function smoothness with respect to {\it joint actions}. 
% , as the continuous action space can using projected gradient ascent. 
Similar to our treatment in Eq.~\eqref{inner-max}, we now seek to promote learning a Q-function that yields a consistent value when perturbing the actions for any small subset of agents. Specifically, for discrete action space, we define a regularizer on the global Q-function as
\begin{align}
    \label{eq:action-reg}
    R^A_\omega( s, \mathbf{a}) = \mathop{\text{max}}_{D(\mathbf{a}, \mathbf{a'}) \leq K} ||Q(s, \mathbf{a}; \omega) - Q(s, \mathbf{a'}; \omega)||_2^2,
\end{align}
where $D(\mathbf{a}, \mathbf{a'}) = \tsum_{i} I(\mathbf{a}_i \neq \mathbf{a'}_i)$. 
The regularizer Eq.~\eqref{eq:action-reg} seeks to compute the worst subset of changed actions with cardinality less than $K$. 
For continuous action spaces, one could replace the metric $D$ in Eq.~\eqref{eq:action-reg} by a differentiable metric defined over the action space (e.g., $\norm{\cdot}_2$-norm), and then evaluate the regularizer with projected gradient ascent.

To evaluate the adversarial regularizer for the discrete action space, we propose to solve Eq.~\eqref{eq:action-reg} in a greedy manner by finding the worst-case change in the action of a single agent at a time, until the action of $K$ agents is changed. 
Specifically, at each training step, we search through all the agents/actions and then pick the actions that produce the top-$K$ changes in the Q-function, resulting in a $\mathcal{O}(|\mathcal{A}|*N*K)$ computation.
Our complete algorithm can be found in Appendix \ref{app:action}, and we find that in our numerical study, perturbing the action of a single agent ($K = 1$) is sufficient for increased robustness.

Similar to the regularizer in Eq.~\eqref{inner-max}, the regularizer in Eq.~\eqref{eq:action-reg} provides the benefits of Lipschitz smoothness (with respect to the Hamming distance) and data augmentation with adversarial examples. If the behavior of a few agents changes (either maliciously or randomly), the behavior of policies trained by conventional methods may change drastically. On the other hand, policies trained by our method will continue to make reasonable decisions, resulting in more stable performance (see section \ref{sec:experiments}).

%\vspace{-0.1in}

\subsection{Extension to Mean-field MARL}
\label{sec:mean-field}
%\vspace{-0.1in}
MARL algorithms have been known to suffer from the curse of many agents \citep{wang2020breaking}, as the search space of policies and value functions grows exponentially w.r.t. the number of agents.
A practical approach to tackle this challenge of scale is to adopt the mean-field approximation, which views each agent as realizations from a distribution of agents. 
% Although useful, this viewpoint means that the ERNIE framework cannot be straightforwardly applied to mean-field MARL algorithms.
This distributional perspective requires a distinct treatment of ERNIE applied to the mean-field setting.
% In this section, we discuss how the ERNIE framework can be extended to learn robust mean-field policies. 

Mean-field MARL avoids the curse of many agents by approximating the interaction between each agent and the global population of agents with that of an agent and the average agent from the population. 
In particular, we can approximate the action-value function of agent $j$ as
$
Q^j(\mathbf{s}, \mathbf{a}) = Q^j(s_j, d_s, a_j, \bar a_j),
$
where \textbf{a} is the global joint action, \textbf{s} is the global state, $\bar a_j$ is the average action of agent $j$'s neighbors, and $d_s$ is the empirical distribution of states over the population. 
Such an approximation has found widespread applications in practical MARL algorithms \citep{yang2018mean, li2019efficient, li2021permutation, gu2021mean}, and can be motivated in a principled fashion for agents of homogeneous nature \citep{wang2020breaking, li2021permutation}.

To learn robust and scalable policies, we extend ERNIE to the mean-field setting by applying adversarial regularization to the approximation terms $d_s$ and $\bar a^j$. It is important to note that as the terms $d_s$ and $d'_s$ represent distributions over states, we bound the attack by the Wasserstein distance \citep{ruschendorf1985wasserstein}. In what follows we only apply the regularizer to $d_s$ for simplicity. This leads to a new regularizer defined over the mean field state
% \begin{align*}
% % \label{inner-max-wasserstein}
%     R^Q_\cW( s; \theta) =
%     \mathop{\text{max}}_{\cW(d'_s, d_s) \leq \epsilon} \tsum_{a\in \cA} \norm{Q_\theta(s, d'_s, a) - Q_\theta(s, d_s, a)}_2^2,
% \end{align*}
\begin{align*}
% \label{inner-max-dual}
    R^Q_\cW( s; \theta) =
    \mathop{\text{max}}_{\cW(d'_s, d_s)\leq\epsilon} \sum_{a\in \cA} \norm{Q_\theta(s, d'_s, a) - Q_\theta(s, d_s, a)}_2^2,% \\- \lambda_s ,
\end{align*}
where $\cW$ denotes the Wasserstein distance metric. Since the explicit Wasserstein constraint may be difficult to optimize in practice, we can instead enforce the constraint through regularization, as displayed in Appendix \ref{app:wass}.
% and the regularization term $\cW(d'_s, d_s)$ is used to constrain the perturbation size. The Wasserstein distance term can be computed using IPOT methods with little added computational cost \citep{xie2020fast}.
% where $\cW_s$ is the Sinkhorn distance \citep{cuturi2013sinkhorn}, which reduces the computational cost of Wasserstein distance by adding an entropic penalty. The term $\cW_s(d'_s, d_s)$ can be differentiated through with little added computational cost compared to the original ERNIE framework \citep{luise2018differential}.

% \input{source_files/approximation.tex}

% % %\vspace{-0.1in}
% % \section{Extension to Mean-field MARL}
% % \label{sec:mean-field}
% % %\vspace{-0.1in}
% % \input{icml2023/source_files/mean-field.tex}

\section{Experiments}
\label{sec:experiments}

We conduct extensive experiments to demonstrate the effectiveness of our proposed framework. In each environment, we evaluate MARL algorithms trained by the ERNIE framework against baseline robust MARL algorithms. To evaluate robustness, we train MARL policies in a non-perturbed environment and evaluate these policies in a perturbed environment. The reported results are gathered over five runs for each algorithm. Given the space limit, we put additional results in Appendix \ref{app:additional}.

\textbf{Traffic Light Control.} In this scenario, cooperative agents learn to minimize the total travel time of cars in a traffic network. We use QCOMBO \citep{zhang2019integrating} and COMA \citep{foerster2018counterfactual} (results in appendix \ref{app:additional}) as our baseline algorithms and conduct experiments using the Flow framework \citep{wu2017flow}. 
% The reward (see Appendix \ref{sec:app-traffic}) is a combination of seven important traffic factors, and the higher the reward the better.
We train the MARL policies on a two-by-two grid (four agents). We then evaluate the policies on a variety of realistic environment changes, including different car speeds, traffic flows, network topologies, and observation noise. In each setting, we plot the reward for policies trained with ERNIE, the baseline algorithm (QCOMBO), and another baseline where the attack $\delta$ is generated by a Gaussian random variable (Baseline-Gaussian, see Appendix \ref{app:gaussian}). Implementation details can be found in Appendix \ref{app:traffic}.

Figure \ref{fig:QCOMBO}, \ref{fig:large}, and \ref{fig:topology} show that the baseline algorithm is vulnerable to small changes in the training environment (higher reward is better). On the other hand, ERNIE achieves more stable reward on each environment change. 
% We also find that the ERNIE framework helps the policies adapt to different traffic network topologies, where the baseline completely fails.
This observation confirms that the ERNIE framework can improve robustness against observation noise and changing transition dynamics. The Gaussian baseline performs well on some environment changes, like when the observations are perturbed by Gaussian noise, but performs poorly on other environment changes, like when the car speed is changed. We hypothesize that while some environment changes may be covered by Gaussian perturbations, other environment changes are unlike Gaussian perturbations, resulting in a poor performance from this baseline.

\textbf{Robustness Against Malicious Actions.} We also evaluate the extension of ERNIE to robustness against changing agent behavior, which we refer to as ERNIE-A. To change agent behavior, we adversarially change the action of a randomly selected agent a small percentage of the time, i.e. 5\% or 3\% of the time. As can be seen in Figures \ref{fig:malicious3} and \ref{fig:malicious5}, the two baseline algorithms perform poorly when some agent's behavior changes. In contrast, ERNIE-A is able to maintain a higher reward.

\begin{figure*}[htb!]
\centering
     \begin{subfigure}{0.24\textwidth}
         \centering
         \includegraphics[width=\textwidth]{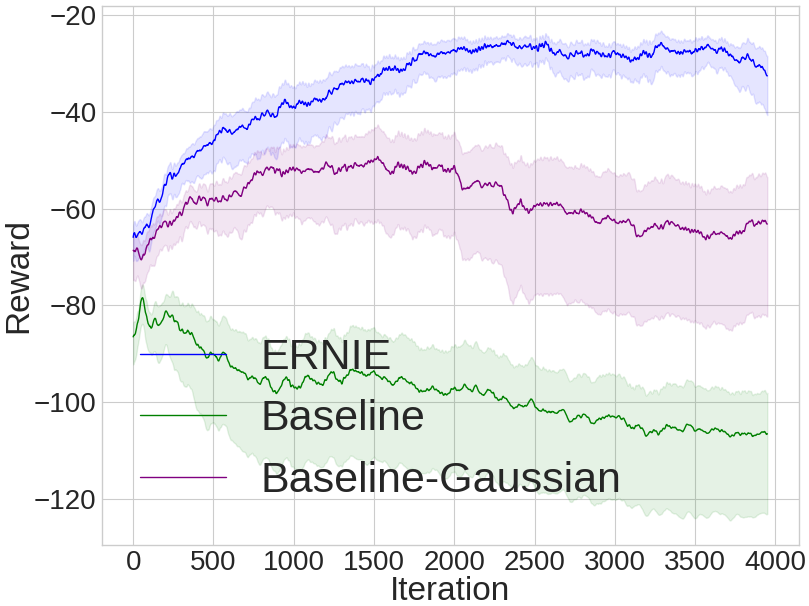}
         \caption{Gaussian Noise (0.01)}
     \end{subfigure}
     \hspace{0.05\textwidth}
     \begin{subfigure}{0.235\textwidth}
         \centering
         \includegraphics[width=\textwidth]{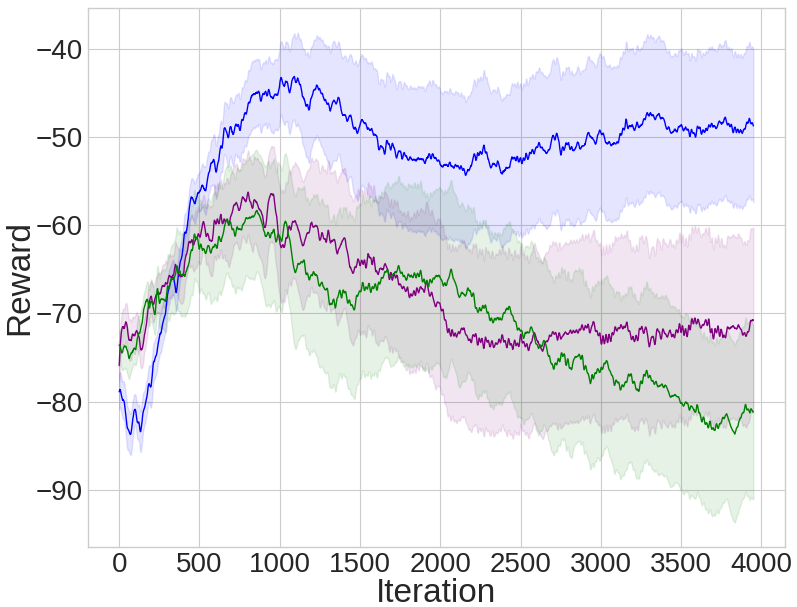}
         \caption{Different Speed}
     \end{subfigure}
     \hspace{0.05\textwidth}
    \begin{subfigure}{0.243\textwidth}
         \centering         \includegraphics[width=\textwidth]{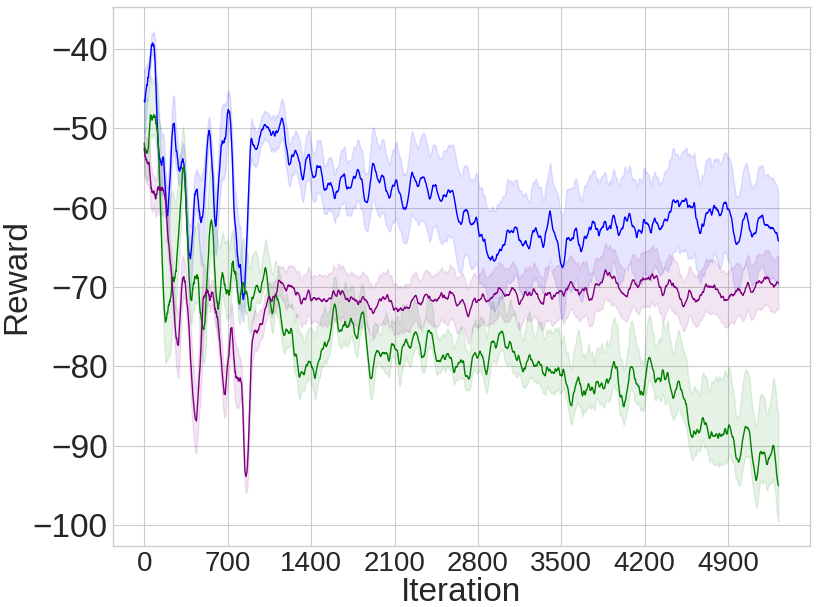}
         \caption{Traffic Flow (Flow-3)}
     \end{subfigure}
    \caption{Evaluation curves on different environment changes for traffic light control.}
    \label{fig:QCOMBO}
\end{figure*}

\begin{figure*}[htb!]
\centering
     \begin{subfigure}{0.22\textwidth}
         \centering
         \includegraphics[width=\textwidth]{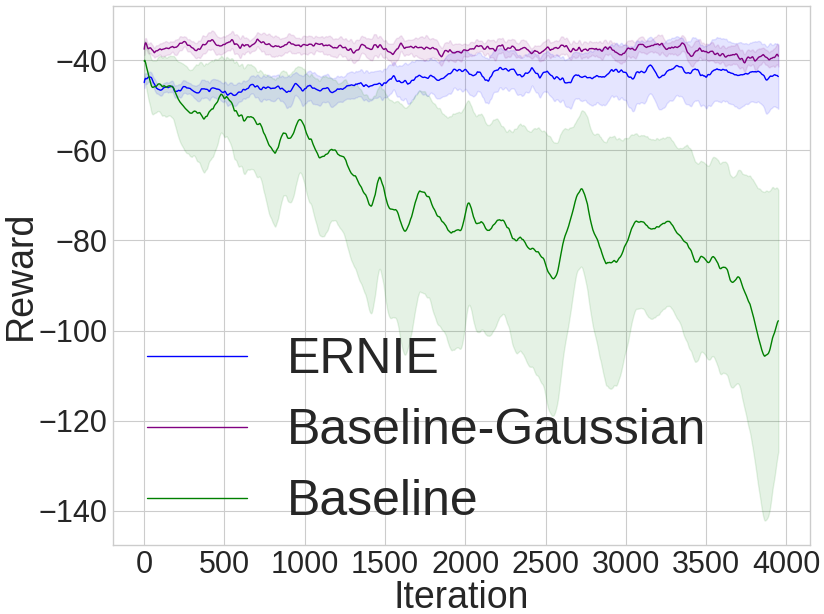}
         \caption{Larger Network}
         \label{fig:large}
     \end{subfigure}
     \hspace{0.001\textwidth}
     \begin{subfigure}{0.22\textwidth}
         \centering
         \includegraphics[width=\textwidth]{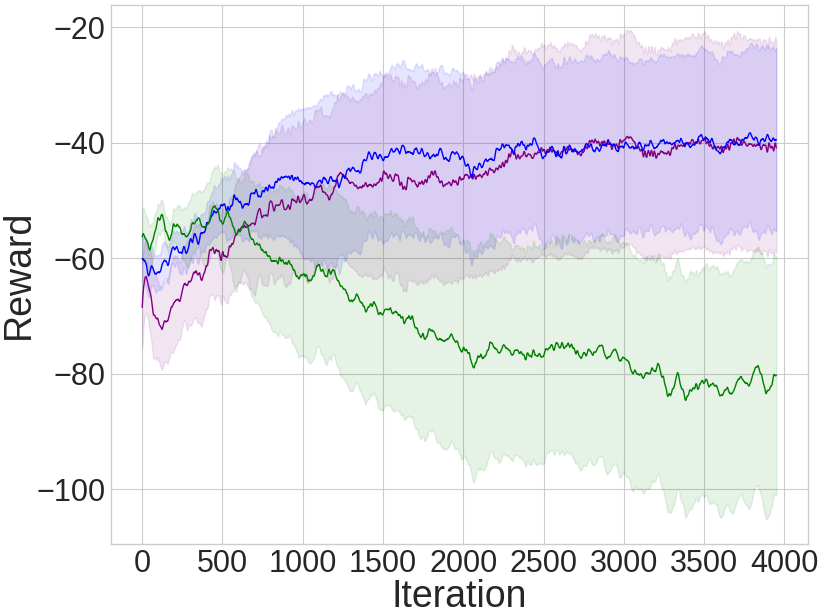}
         \caption{Different Topology}
         \label{fig:topology}
     \end{subfigure}
     \hspace{0.001\textwidth}
    \begin{subfigure}{0.22\textwidth}
         \centering         \includegraphics[width=\textwidth]{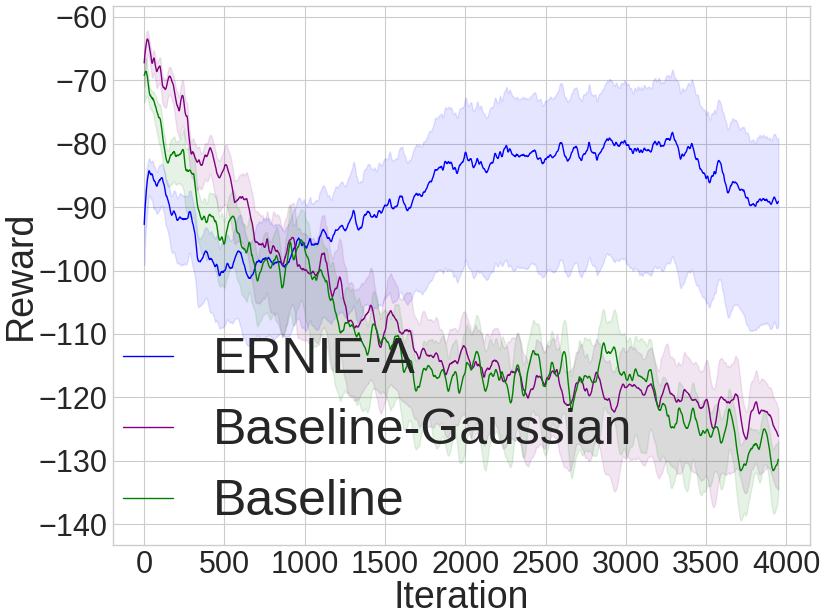}
         \caption{Actions (3\%)}
         \label{fig:malicious3}
     \end{subfigure}
     \hspace{0.001\textwidth}
    \begin{subfigure}{0.22\textwidth}
         \centering         \includegraphics[width=\textwidth]{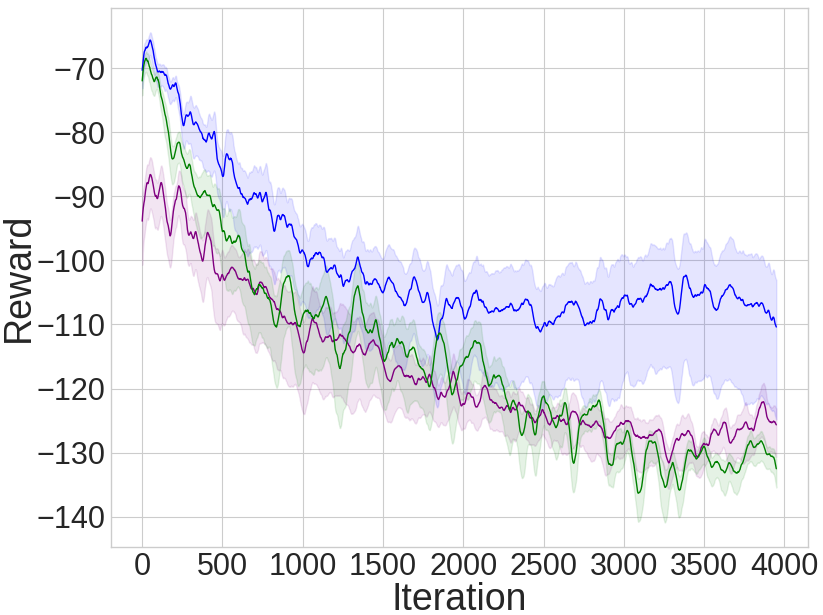}
         \caption{Actions (5\%)}
         \label{fig:malicious5}
     \end{subfigure}
    \caption{Performance on changed traffic network topologies and with malicious agents. In Figures (c) and (d) we perturb the actions according to the specified percentages.}
\end{figure*}

\textbf{Particle Environments.} We evaluate ERNIE on the cooperative navigation, predator-prey, tag, and cooperative communication tasks. In each setting, we investigate the performance of the baseline algorithm (MADDPG), ERNIE, M3DDPG, and the baseline-gaussian in environments with varying levels of observation noise. We also compare ERNIE to the RMA3C algorithm proposed in  \citep{han2022solution}. In Figure \ref{fig:particle}, we find ERNIE performs better or equivalently than MADDPG in all settings. Surprisingly, M3DDPG can provide some robustness against observation noise, even though it is designed to provide robustness against malicious actions.

\begin{figure*}[htb!]
\centering
     \begin{subfigure}{0.22\textwidth}
         \centering
         \includegraphics[width=\textwidth]{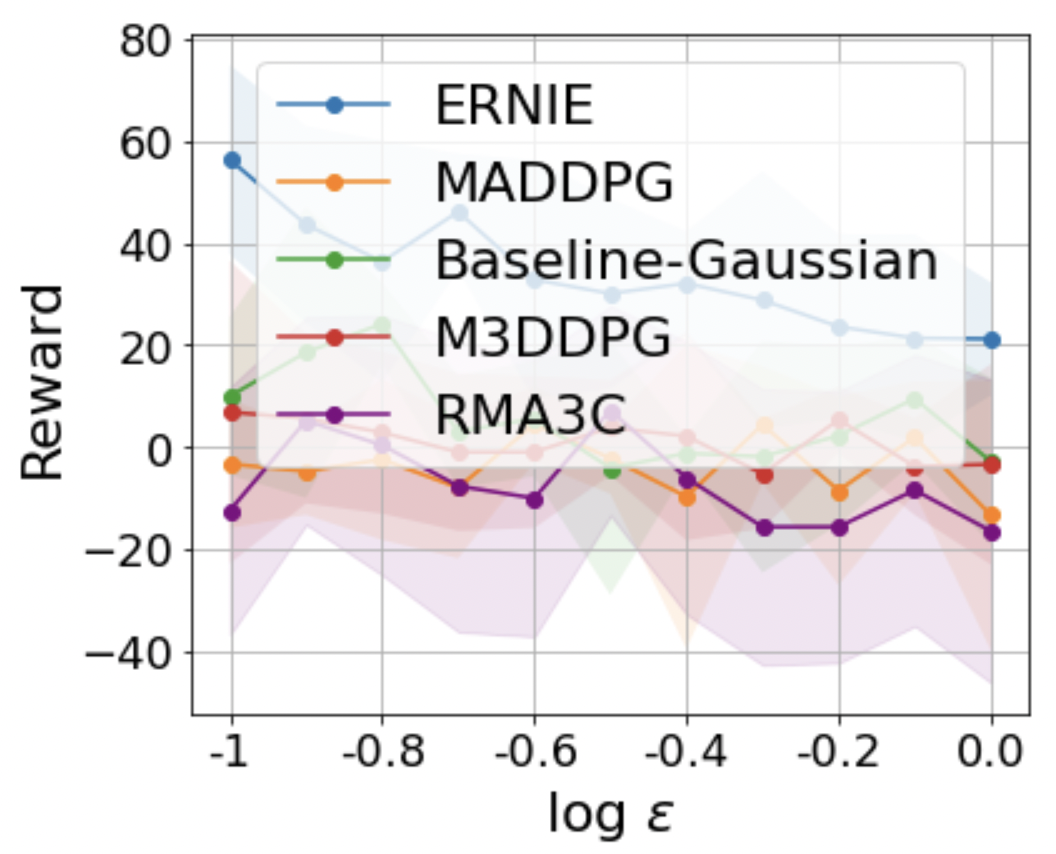}
         \caption{Covert Comm.}
     \end{subfigure}
     \hspace{0.001\textwidth}
     \begin{subfigure}{0.22\textwidth}
         \centering
         \includegraphics[width=\textwidth]{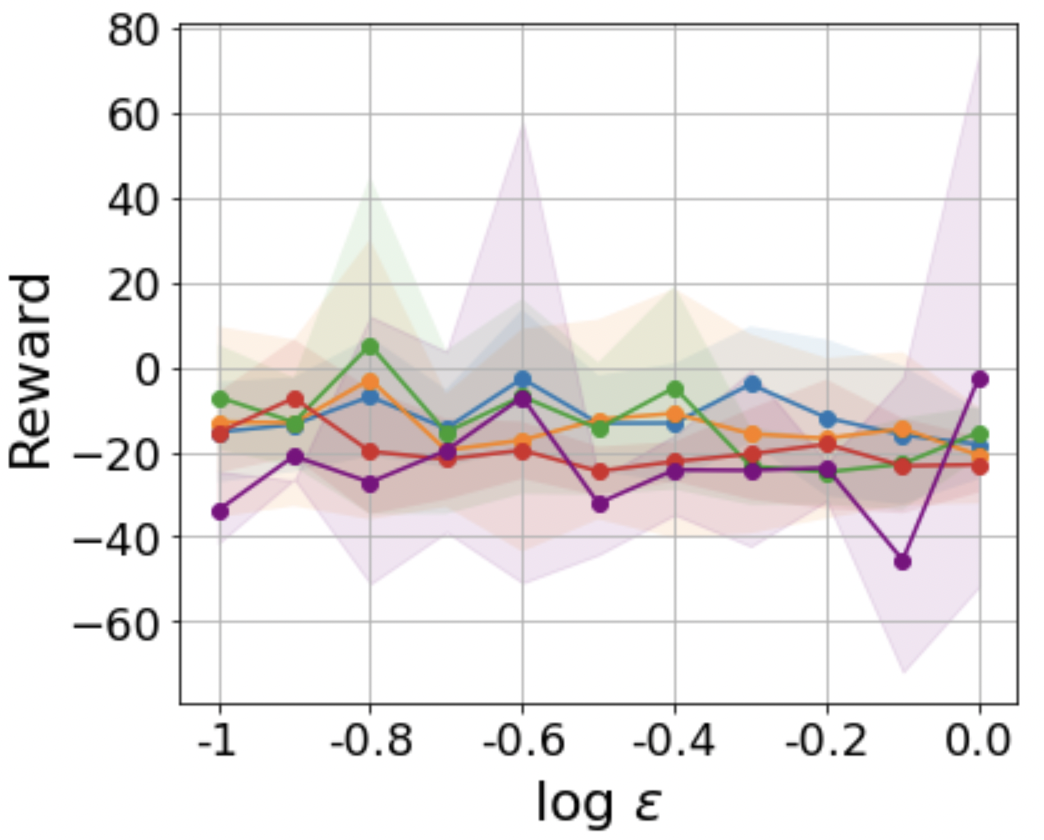}
           \caption{Tag}
     \end{subfigure}
     \hspace{0.001\textwidth}
    \begin{subfigure}{0.23\textwidth}
         \centering         \includegraphics[width=\textwidth]{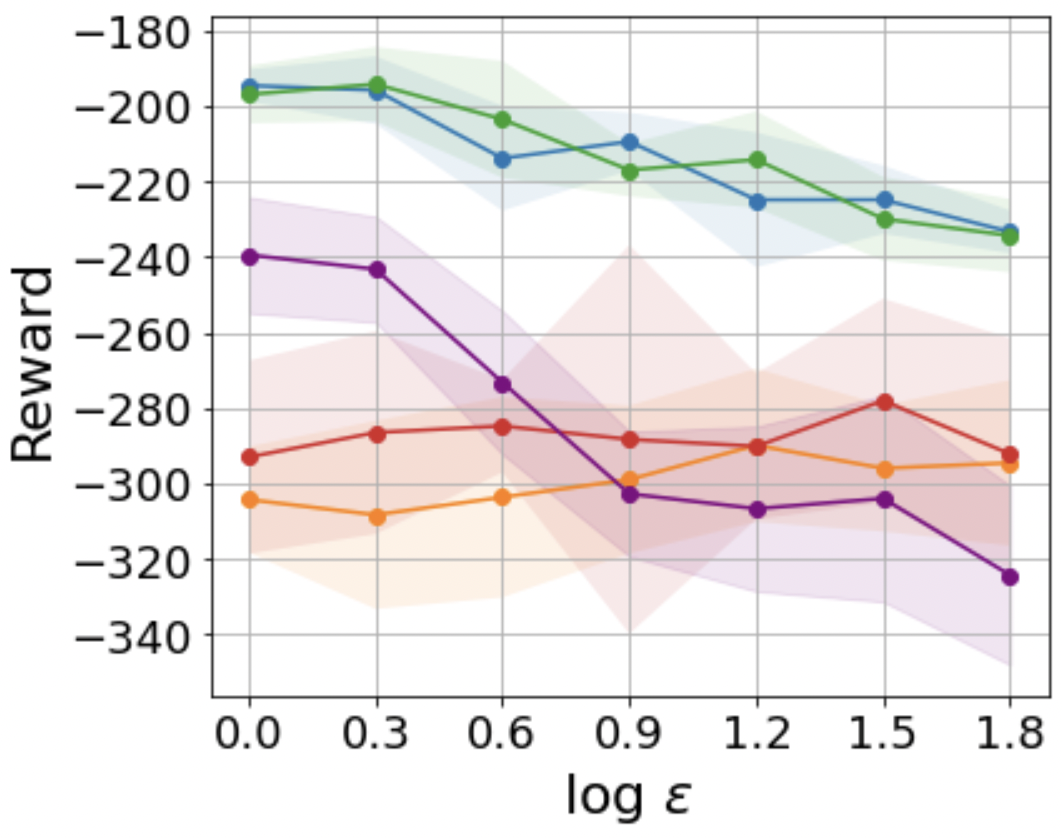}
           \caption{Navigation}
     \end{subfigure}
     \hspace{0.001\textwidth}
    \begin{subfigure}{0.22\textwidth}
         \centering         \includegraphics[width=\textwidth]{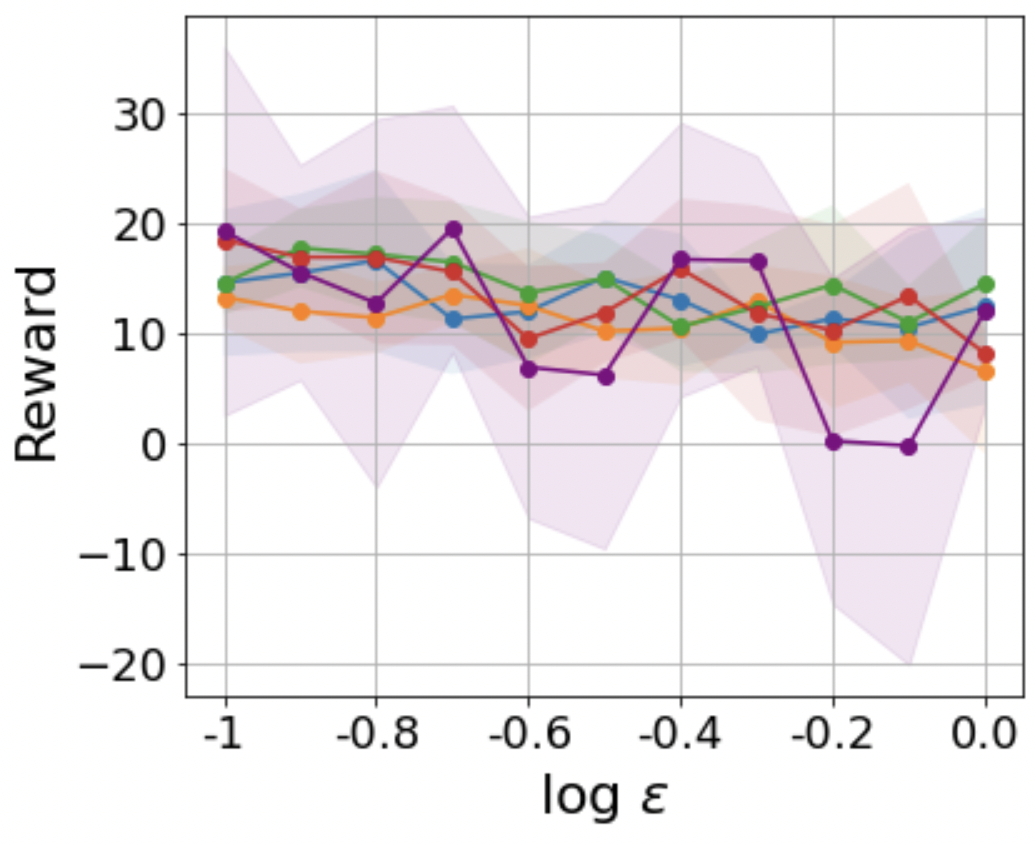}
           \caption{Predator Prey}
     \end{subfigure}
    \caption{Training reward versus noise level ($\epsilon$) in the evaluation environment for the particle games.}
    \label{fig:particle}
\end{figure*}

\textbf{Mean-field MARL.}
We evaluate the performance of the mean-field ERNIE extension on the cooperative navigation task \citep{lowe2017multi} with different numbers of agents. We compare the performance of the baseline algorithm, ERNIE, and M3DDPG under various levels of observation noise. We use mean-field MADDPG as our baseline and follow the implementation of \citep{li2021permutation}. 
As can be seen in Figure \ref{fig:mean-field}, ERNIE displays a higher reward and a slower decrease in performance across noise levels.

\begin{figure*}[htb!]
    \centering
     \begin{subfigure}[b]{0.24\textwidth}
         \centering
         \includegraphics[width=\textwidth]{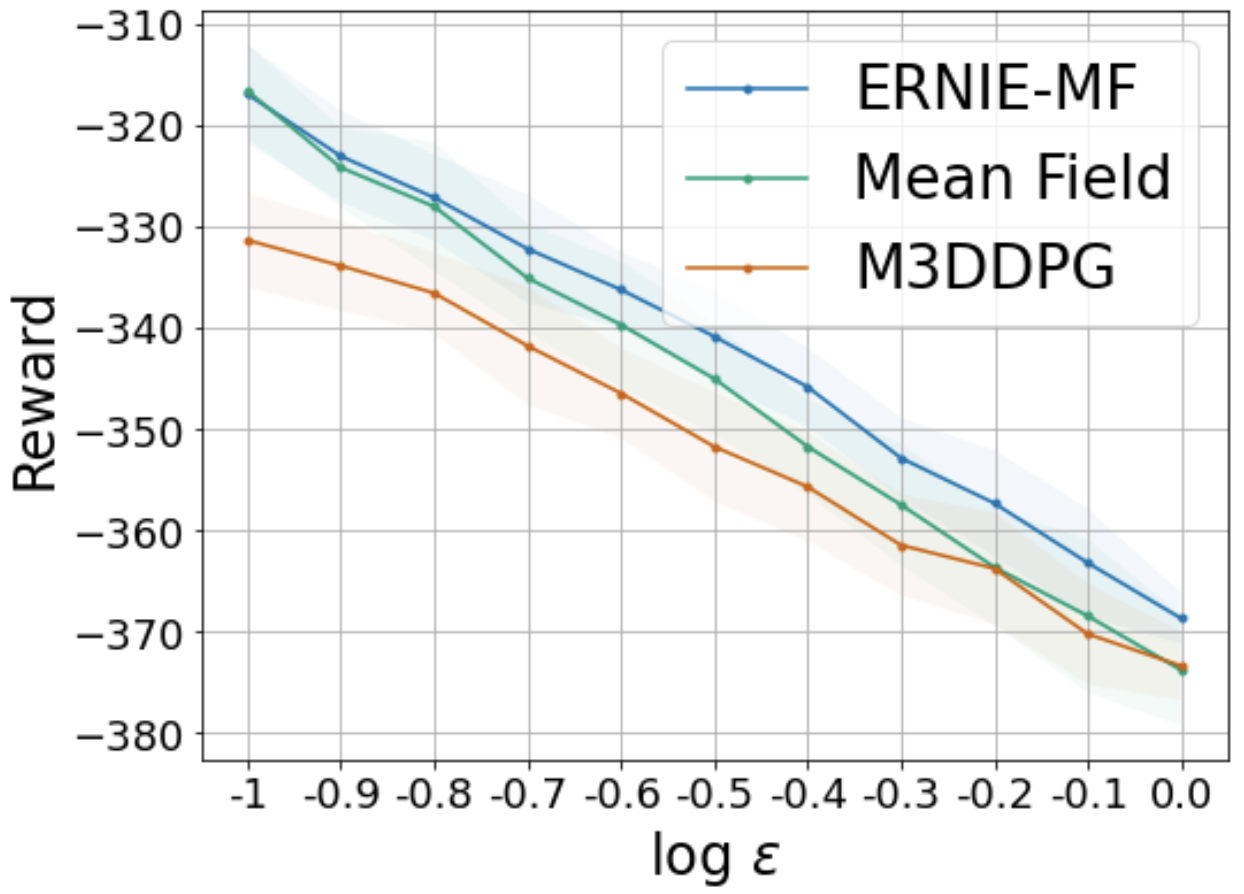}
         \caption{N=3}
     \end{subfigure}
     \begin{subfigure}[b]{0.24\textwidth}
         \centering
         \includegraphics[width=\textwidth]{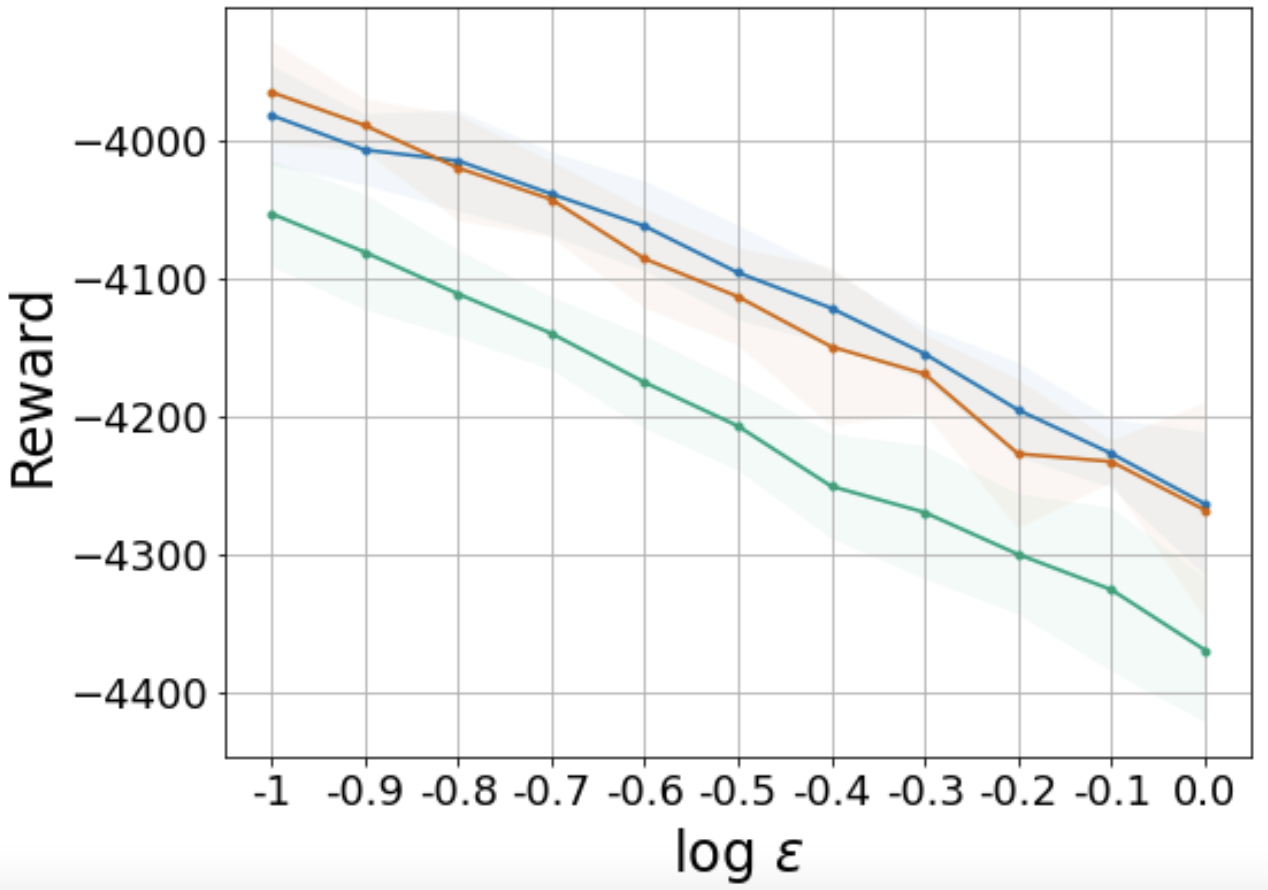}
         \caption{N=6}
     \end{subfigure}
     \begin{subfigure}[b]{0.24\textwidth}
         \centering
         \includegraphics[width=\textwidth]{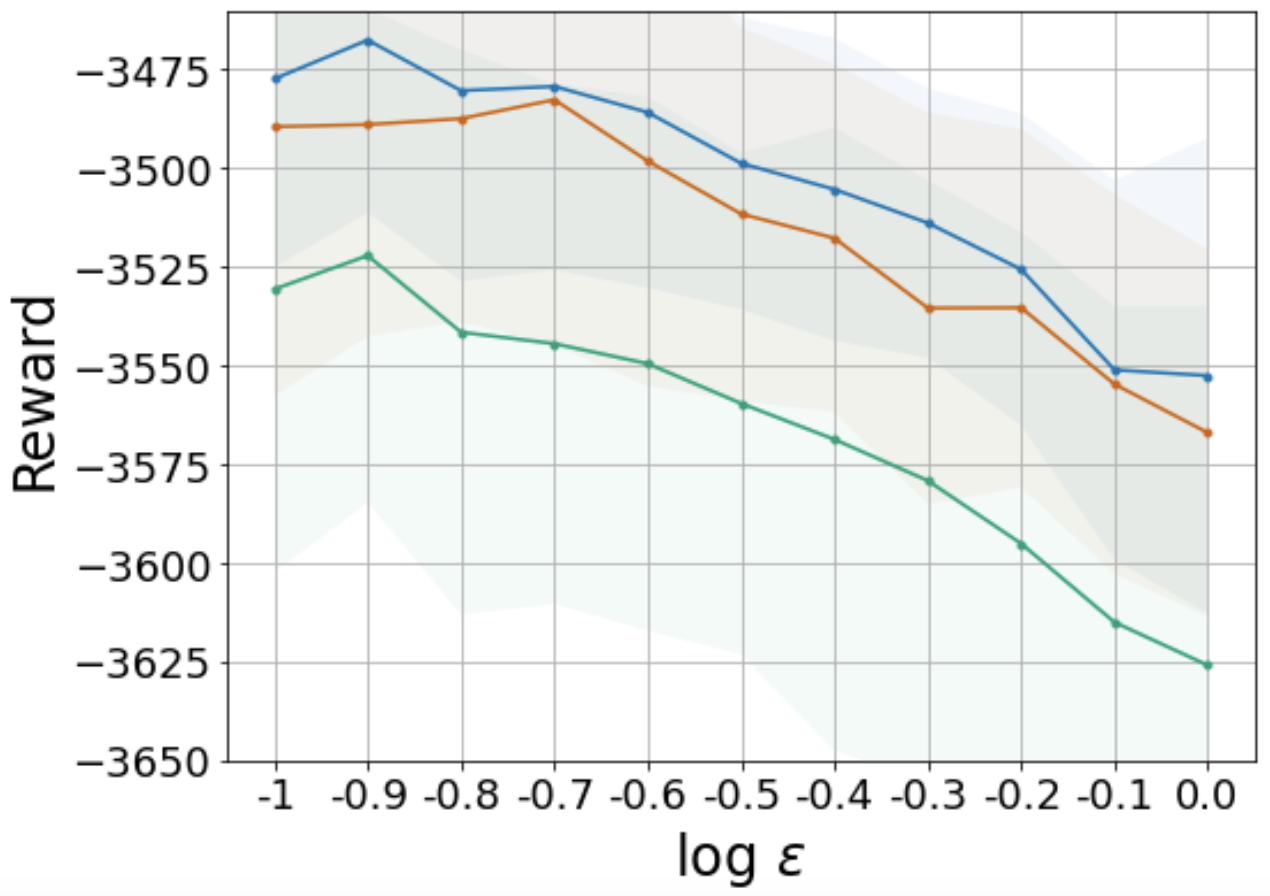}
         \caption{N=15}
     \end{subfigure}
     \begin{subfigure}[b]{0.22\textwidth}
     \label{fig:width}
         \centering
         \includegraphics[width=\textwidth]{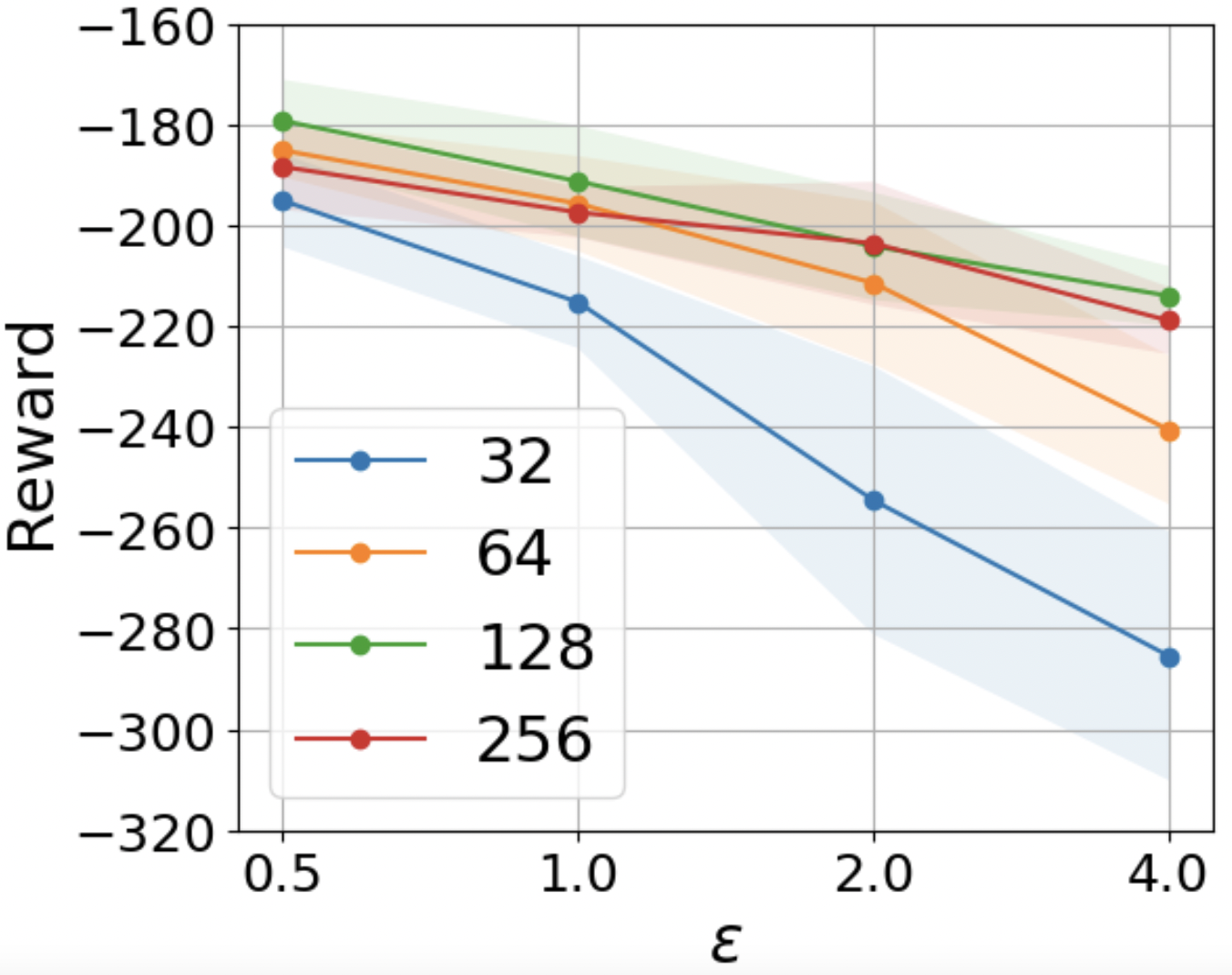}
         \caption{Model Width}
         \label{fig:width-net}
     \end{subfigure}
\caption{(a)-(c) Training reward versus noise level (mean $\pm$ standard deviation over 5 runs) with a various number of agents (N) (d) Network width and robustness.}
\label{fig:mean-field}
\end{figure*}
\begin{figure*}[htb!]
\centering
     \begin{subfigure}{0.21\textwidth}
         \centering
         \includegraphics[width=\textwidth]{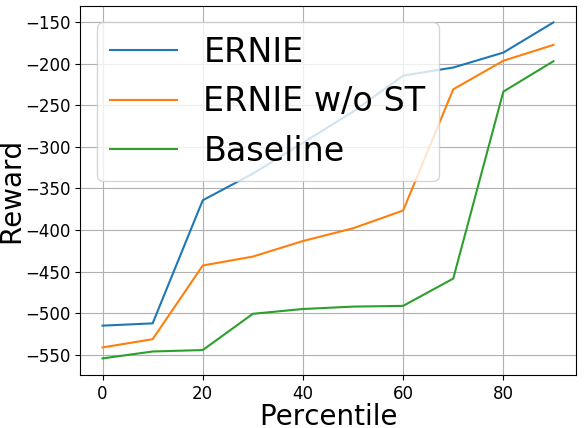}
         \caption{Different Speed}
        \label{fig:perc-speed}
     \end{subfigure}
     \hspace{0.001\textwidth}
     \begin{subfigure}{0.21\textwidth}
         \centering
         \includegraphics[width=\textwidth]{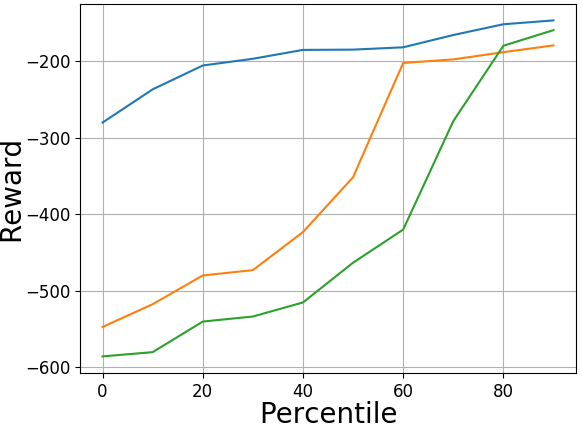}
           \caption{Observation Noise}
           \label{fig:perc-noise}
     \end{subfigure}
     \hspace{0.001\textwidth}
    \begin{subfigure}{0.22\textwidth}
         \centering         \includegraphics[width=\textwidth]{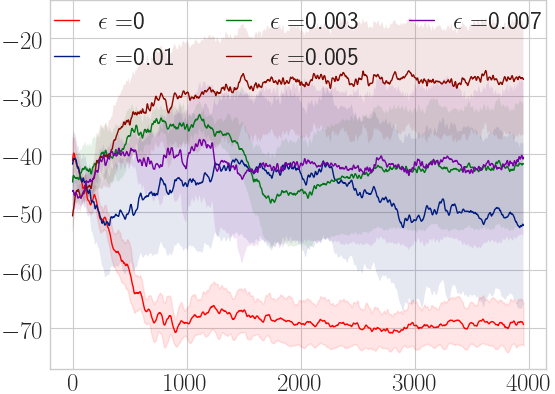}
           \caption{Sensitivity to $\epsilon$}
           \label{fig:sensitivity-epsilon}
     \end{subfigure}
     \hspace{0.001\textwidth}
    \begin{subfigure}{0.22\textwidth}
         \centering         \includegraphics[width=\textwidth]{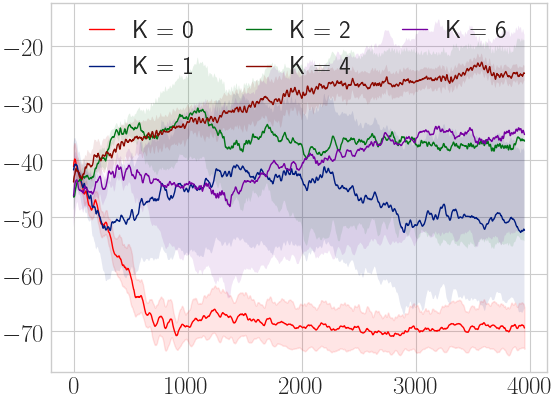}
           \caption{Sensitivity to $K$}
           \label{fig:sensitivity-k}
     \end{subfigure}
    \caption{Sensitivity and ablation experiments.}
\end{figure*}

\textbf{Hyperparameter Study.}
We investigate the sensitivity of ERNIE to the hyperparameters $K$ (the number of attack steps) and $\epsilon$ (the perturbation strength). We plot the performance of different hyperparameter settings in the traffic light control task with perturbed car speeds on three random seeds. From Figures \ref{fig:sensitivity-epsilon} and \ref{fig:sensitivity-k} we can see that adversarial training ($K>0$) outperforms the baseline $(K=0)$ for all $K$. Similarly, we can see that all values of $\epsilon$ outperform the baseline $(\epsilon=0)$. This indicates that ERNIE is robust to different hyperparameter settings of $\epsilon$ and $K$.

\textbf{Sensitivity and Ablation Study.}
The advantage of the ERNIE framework goes beyond improving the mean reward. To show this, we evaluate 10 different initializations of each algorithm in two traffic environments: one with different speeds and another environment with observation noise. We then sort the cumulative rewards of the learned policies and plot the percentiles in Figures \ref{fig:perc-speed} and \ref{fig:perc-noise}. Although the best-case performance is the same for all algorithms, ERNIE significantly improves the robustness to failure. As an ablation, we evaluate the effectiveness of ERNIE with and without the Stackelberg formulation of adversarial regularization (ST). ERNIE displays better performance in both settings, indicating that Stackelberg training can lead to a more stable training process. Ablation experiments in other environments can be found in Appendix \ref{app:able}.

\textbf{Robustness and Network Width.} In Section \ref{app:wide}, we show that in order to learn a robust policy with ERNIE, we should use a sufficiently wide neural network. Therefore, we evaluate the robust performance of ERNIE using  policy networks with 32, 64, 128, and 256 hidden units. We carefully tune their regularization parameters such that all networks perform similarly in the lightly perturbed environment. As seen in Figure \ref{fig:width-net}, when the perturbed testing environment deviates more from the training environment, the performance of the narrower policy networks (32/64 hidden units) significantly drops, while the wider networks (128/256 hidden units) are more stable.
We also observe that when the policy networks are sufficiently wide (128/256), their robustness is similar.

\textbf{Robotics Experiments.} Additional experiments in multi-agent drone control environments can be found in Appendix \ref{app:drone}, which further verify the enhanced robustness that ERNIE provides.

\section{Discussion}
ERNIE is motivated by smoothness, but real-world environments are not always smooth. In section \ref{sec:theory}, we hypothesize that most environments are at least partially smooth, implying that smoothness can serve as useful prior knowledge while providing robustness (our experiments validate this). To increase ERNIE's flexibility, future work could adaptively select $\lambda$ based on the current state to allow for state-dependent smoothness.

\section{Acknowledgments}
We would like to thank Haoming Jiang and Ethan Wang for their early contributions to this project.

\bibliographystyle{unsrtnat}
\bibliography{refs}

\newpage
\appendix
\onecolumn

\section{Proofs}

% \subsection{From Smooth Environments to Smooth Value functions}
% \label{app:smooth-env}
% Our motivation of adopting smooth policies starts with the following observation  that for an environment that is smooth with respect to  the state, the value functions are also smooth with respect to the state.

% \begin{theorem}\label{theorem_smooth_q}
% Suppose the following conditions holds:
% \begin{itemize}[noitemsep,topsep=0pt]
% \item The reward function $r(s,a)$ is $L_r$-Lipschitz in $s$, and satisfies $\abs{r(s,a)} \leq R$ for all $(s,a) \in \cS \times \cA$.
% \item The transition kernel of the MDP is $L_P$-Lipschitz in the sense that $$
% \norm{\PP(\cdot|s,a) - \PP(\cdot|s',a)}_1 \leq L_P \norm{s - s'}
% $$ for any $(s,s', a) \in \cS \times \cS \times \cA$.
% \end{itemize}
% Then the Q-function is Lipschitz continuous with constant $L_Q \coloneqq L_r + \gamma R \abs{\cS} L_P / (1-\gamma)$.

% \end{theorem}

\begin{proof}[Proof of Theorem \ref{theorem_smooth_vals}]
From the definition of Q-function, we have 
\begin{align*}
  \abs{ Q^{\pi}(s,a) - Q^{\pi}(s',a)}
    & \leq \abs{ r(s,a) - r(s', a)} + \gamma \abs{ \tsum_{s_1 \in \cS} \PP(s_1|s,a) V^{\pi}(s_1) 
    - \tsum_{s_1 \in \cS} \PP(s_1|s',a) V^{\pi}(s_1)
    } \\
    & \leq L_r \norm{s - s'} 
    + \tfrac{\gamma  L_{\PP}}{1-\gamma}  \norm{s - s'},
\end{align*}
where the last inequality also uses the fact that $\norm{V^{\pi}}_{\infty} \leq \tfrac{1}{1-\gamma}$.
% \end{proof}

% \begin{theorem}\label{theorem_smooth_v}
% Under the same conditions in Theorem \ref{theorem_smooth_q}, define $L_Q \coloneqq L_r + \gamma R \abs{\cS} / (1-\gamma)$, then 
% $V^{\pi}(s)$ is Lipschitz continuous in $s$ with constant 
% $L_{\pi}/(1-\gamma) + L_Q$.
% \end{theorem}

% \begin{proof}
From the relation of $Q^{\pi}$ and $V^{\pi}$, we have 
\begin{align*}
    & |V^{\pi}(s) - V^{\pi}(s')| \\
    = & 
    \abs{
    \inner{Q^{\pi}(s, \cdot)}{\pi(\cdot|s)}
    -
    \inner{Q^{\pi}(s', \cdot)}{\pi(\cdot|s')}
    } \\
    = & 
    \abs{\inner{Q^{\pi}(s,\cdot)}{\pi(\cdot|s) - \pi(\cdot|s')}
    + \inner{Q^{\pi}(s,\cdot) - Q^{\pi}(s',\cdot)}{\pi(\cdot|s')
    } }
    \\
  \leq & 
  \abs{\inner{Q^{\pi}(s,\cdot)}{\pi(\cdot|s) - \pi(\cdot|s')}}
  + 
  \abs{\inner{Q^{\pi}(s,\cdot) - Q^{\pi}(s',\cdot)}{\pi(\cdot|s')
    }}
  \\
  \leq & 
  \tfrac{L_{\pi} }{1-\gamma} \norm{s - s'} 
  + L_Q \norm{s - s'}.
\end{align*}
\end{proof}

\begin{proof}[Proof of Theorem \ref{thrm_existence_smooth_policy}]
Let $Q^* \in \RR^{\abs{\cS} \times \abs{\cA}} $ denotes the optimal state-action value function. 
Let $\pi^*$ denote any optimal policy.
From the Bellman optimality condition, it is clear that 
\begin{align}
\label{bellman_opt_condition}
    \mathrm{sup}(\pi(\cdot|s)) \subseteq 
    \mathrm{Argmax}_{a \in \cA} Q^*(s,a).
\end{align}
From the performance difference lemma, we obtain that for any policy $\pi$, the optimality gap of $\pi$ can be bounded by 
\begin{align*}
  0 \leq  V^{\pi^*}(s) - V^{\pi}(s) 
 &= - \rbr{ V^{\pi}(s) - V^{\pi^*}(s) }\\
 & = - \EE_{s' \sim d_s^\pi} \sbr{
 \inner{Q^{\pi^*}(s', \cdot)}{\pi(\cdot|s') - \pi^*(\cdot|s)}
 } \\
 & = \EE_{s' \sim d_s^\pi} \sbr{
 \inner{Q^{\pi^*}(s', \cdot)}{\pi^*(\cdot|s') - \pi(\cdot|s')}
 } \\
 & \leq 
 \sup_{s \in \cS} \inner{Q^{\pi^*}(s, \cdot)}{\pi^*(\cdot|s) - \pi(\cdot|s)}
\end{align*}
where the last inquality uses \eqref{bellman_opt_condition}, which implies that inner product is non-negative for every $s \in \cS$.

Now consider the special policy 
\begin{align*}
    \pi_\eta(\cdot|s) = \mathrm{Softmax} (\eta Q^*(s, \cdot)),
\end{align*}
where operator $\mathrm{Softmax}: \RR^{\abs{\cA}} \to \RR^{\abs{\cA}} $  is defined as 
$
    \sbr{\mathrm{Softmax}(x)}_i = \exp(x_i) / \tsum_{j} \exp(x_j).
$

For any $\epsilon > 0$, let us define 
\begin{align*}
    \cA_\epsilon = \cbr{
    a \in \cA: ~ Q^*(s,a) \leq \max_{a' \in \cA} Q^*(s,a') - \epsilon
    }.
\end{align*}
Consequently, we have 
\begin{align*}
    \inner{Q^{\pi^*}(s, \cdot)}{\pi^*(\cdot|s) - \pi_\eta(\cdot|s)}
    \leq \tfrac{\epsilon}{1-\gamma} + \tsum_{a \in \cA_\epsilon} \pi_\eta(a|s).
\end{align*}
It then suffices to set $\eta$ properly to control the second term above.
Specifically, we have 
\begin{align*}
    \tsum_{a \in \cA_\epsilon} \pi_\eta(a|s)
    \leq 
    \tfrac{\tsum_{a \in \cA_\epsilon} \exp(-\eta \epsilon)}
    {1 + \abs{\cA^c_\epsilon} \exp(-\eta \epsilon) }
    \leq \abs{\cA^c_\epsilon} \exp(-\eta \epsilon),
\end{align*}
By setting $\eta = \log \abs{\cA} / \epsilon $,
we immediately obtain that $\tsum_{a \in \cA_\epsilon} \pi_\eta(a|s) \leq \epsilon$, and hence 
\begin{align*}
     \inner{Q^{\pi^*}(s, \cdot)}{\pi^*(\cdot|s) - \pi_\eta(\cdot|s)}
    \leq \tfrac{\epsilon}{1-\gamma} + \tsum_{a \in \cA_\epsilon} \pi_\eta(a|s)
    \leq \tfrac{2\epsilon}{1-\gamma}, ~ \forall s \in \cS.
\end{align*}
Hence, we obtain that 
$
     V^{\pi^*}(s) - V^{\pi}(s)  \leq \tfrac{2\epsilon}{1-\gamma}, ~ \forall s \in \cS.
$

It remains to show that $\pi_\eta(\cdot|s)$ with the $\eta = \log \abs{\cA} /\epsilon$ is indeed Lipschitz continuous with respect to state $s$.
To this end, 
let us denote the Jacobian matrix of $\mathrm{Softmax}$ at point $x$ as $\cJ_x$.
Simple calculation  yields that 
\begin{align*}
    [\cJ_x]_{i,i} &= \tfrac{\exp(x_i) \tsum_{j \neq i} \exp(x_j)}{(\tsum_j \exp(x_j))^2},  \\
    [\cJ_x]_{i,j} & = 
    - \tfrac{\exp(x_i) \exp(x_j)}{(\tsum_j \exp(x_j))^2}.
\end{align*}
% It is immediate that the $\norm{\cJ_x} \leq M$ for some $M$ independent of $x$, given the fact that $0 \leq x \leq \tfrac{R}{1-\gamma}$
From the intermediate value theorem, we obtain that 
$
    \norm{\mathrm{Softmax}(x) - \mathrm{Softmax}(y) }_1
    \leq \norm{\cJ_z}_1 \norm{x - y}_1,
$
for $z  = \alpha x + (1-\alpha) y$ and $\alpha \in [0,1]$.
Since it is clear that 
$
\norm{\cJ_z}_1 \leq 1
$
, we conclude that 
\begin{align*}
    \norm{\pi_\eta(\cdot|s) - \pi_\eta(\cdot|s')}_1
    \leq \eta  \norm{ Q^*(s,\cdot) - Q^*(s', \cdot)}_1
    \overset{(a)}{\leq} \eta  L_Q \abs{\cA}  \norm{s - s'}
    \leq  \abs{\cA}  \log \abs{\cA}  L_Q \norm{s - s'} / \epsilon,
\end{align*}
where inequality $(a)$ applies Theorem \ref{theorem_smooth_vals}.
\end{proof}

\begin{proof}[Proof of Theorem \ref{theorem_v_p}]
We first recall that for any stationary policy $\pi$, the value function $V^\pi$ at any state $\overline{s}$ admits the following description, 
\begin{align}
    V^{\pi}(\overline{s})  & = \EE \sbr{\tsum_{t=0}^\infty \gamma^t r(s_t, a_t) | s_0 = \overline{s}} \nonumber \\
    & =  \tsum_{s\in \cS, a\in \cA} \tsum_{t=0}^\infty \gamma^t \PP^{\pi} (s_t =s, a_t = a | s_0 = \overline{s}) r(s,a) \nonumber \\
    & =  \tsum_{s\in \cS, a\in \cA} \tsum_{t=0}^\infty
  \gamma^t \PP^\pi(s_t = s| s_0 = \overline{s}) \pi(a|s) r(s,a)
    . \label{value_standard_decomp}
\end{align}
% where $\rho^{\pi}_{\overline{s}}(s,a) = (1-\gamma) \tsum_{t=0}^\infty \gamma^t \PP^{\pi} (s_t =s, a_t = a | s_0 = \overline{s}) $.

Similarly, given the definition of $V^{\tilde{\pi}}$ for the non-stationary policy, we know that 
\begin{align}
    V^{\tilde{\pi}}(\overline{s}) 
    = \tsum_{s \in \cS, a \in \cA}
    \tsum_{t=0}^\infty \gamma^t  \PP^{\tilde{\pi}}(s_t = s|s_0 = \overline{s})  \tilde{\pi}(a | s) r(s,a). \label{value_perturb_decomp}
\end{align}

By definition, we have the following observations,
\begin{align*}
   \PP^{\tilde{\pi}, t} \coloneqq   \PP^{\tilde{\pi}}(s_t = \cdot|s_0 = \overline{s}) 
    = \prod_{i=0}^{t-1} \PP^{\tilde{\pi}}_i e(\overline{s}),
    ~~
  \PP^{\pi, t} \coloneqq  \PP^{\pi} (s_t = \cdot|s_0 = \overline{s}) = 
    \rbr{\PP^\pi}^t e(\overline{s}),
\end{align*}
where $\PP_i^{\tilde{\pi}}(s', s) \coloneqq \tsum_{a \in \cA} \tilde{\pi}_i(a|s) \PP(s'|s,a)$,
and $\PP_i^{\pi}(s', s) \coloneqq \tsum_{a \in \cA} \pi_i(a|s) \PP(s'|s,a)$,
and $e(s) \in \RR^{\abs{\cS}}$ denotes the one-hot vector with non-zero entry corresponding to the state $s$
.
Hence 
\begin{align}
    & \norm{ \PP^{\tilde{\pi}, t} -  \PP^{\pi, t}}_1 \nonumber \\ 
      = &  \norm{
    \prod_{i=0}^{t-1} \PP^{\tilde{\pi}}_i e(\overline{s}) - \rbr{\PP^\pi}^t e(\overline{s})
    }_1 \nonumber \\
     \leq &
     \norm{
    \prod_{i=0}^{t-1} \PP^{\tilde{\pi}}_i - \rbr{\PP^\pi}^t 
    }_1  \nonumber\\
     \leq &
    \norm{
    \prod_{i=0}^{t-1}  \PP^{\tilde{\pi}}_i
    - \PP^\pi \prod_{i=1}^{t-1}  \PP^{\tilde{\pi}}_i
    + \PP^\pi \prod_{i=1}^{t-1}  \PP^{\tilde{\pi}}_i
    - \rbr{\PP^\pi}^2 \prod_{i=2}^{t-1}  \PP^{\tilde{\pi}}_i + \cdots 
    + \rbr{\PP^\pi}^{t-1} \PP^{\tilde{\pi}}_{t-1}
    - \rbr{\PP^\pi}^{t}
    }_1  \nonumber\\
    \leq & 
    \norm{\prod_{i=0}^{t-1}  \PP^{\tilde{\pi}}_i -
     \PP^\pi \prod_{i=1}^{t-1}  \PP^{\tilde{\pi}}_i
    }_1 
    + \norm{\PP^\pi \prod_{i=1}^{t-1}  \PP^{\tilde{\pi}}_i
    - \rbr{\PP^\pi}^2 \prod_{i=2}^{t-1}  \PP^{\tilde{\pi}}_i}_1
    + \cdots + 
    \norm{\rbr{\PP^\pi}^{t-1} \PP^{\tilde{\pi}}_{t-1}
    - \rbr{\PP^\pi}^{t}}_1 \label{ineq_each_time_total_transit_diff}
    % \lesssim & 
    % t L_\pi \epsilon,
\end{align}
To handle each term above, we make use of the following lemma. 
\begin{lemma}\label{lemma_matrix_norm_bound}
For any $P, Q \in \RR^d$ that are left stochastic matrices, and any matrix $\Delta$ of the same dimension, we have 
\begin{align*}
\norm{P \Delta Q}_1 
\leq  \norm{\Delta}_1.
\end{align*}
\end{lemma}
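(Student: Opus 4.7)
The plan is to read $\norm{\cdot}_1$ as the induced operator $1$-norm on matrices (the maximum absolute column sum), which is consistent with its use on distributions in \eqref{ineq_each_time_total_transit_diff} after multiplication by the one-hot vector $e(\overline{s})$. Under this interpretation, the lemma reduces to submultiplicativity of the operator norm together with the fact that a left stochastic matrix has operator $1$-norm equal to $1$.

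The key steps are as follows. First, note that for a left stochastic matrix $P$ each column is nonnegative and sums to $1$, so $\norm{P}_1 = \max_j \sum_i |P_{ij}| = 1$, and similarly $\norm{Q}_1 = 1$. Second, apply submultiplicativity of the induced $1$-norm to obtain
\begin{align*}
\norm{P \Delta Q}_1 \leq \norm{P}_1 \, \norm{\Delta}_1 \, \norm{Q}_1 = \norm{\Delta}_1,
\end{align*}
which is exactly the claim.

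If a self-contained argument is preferred, I would instead expand $\sum_i |(P \Delta Q)_{ij}| \leq \sum_{k, \ell} \sum_i P_{ik} |\Delta_{k \ell}| Q_{\ell j}$ by the triangle inequality (using nonnegativity of $P$ and $Q$), interchange the order of summation, use $\sum_i P_{ik} = 1$ to eliminate the $P$ factor, bound $\sum_k |\Delta_{k \ell}| \leq \norm{\Delta}_1$ uniformly in $\ell$, and finally invoke $\sum_\ell Q_{\ell j} = 1$ before taking the maximum over $j$. The only subtle point to watch is the convention: because the paper defines $\PP^\pi_i(s', s) = \sum_a \pi_i(a|s) \PP(s'|s, a)$ with the starting state as the column index, the transition matrices are left rather than right stochastic, so it is column sums rather than row sums that equal $1$. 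Beyond this bookkeeping I do not anticipate any obstacle, since the estimate is a routine consequence of submultiplicativity.
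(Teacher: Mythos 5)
Your proposal is correct and follows essentially the same route as the paper: both arguments invoke submultiplicativity of the induced $\ell_1$ (maximum absolute column sum) norm together with the fact that left stochastic matrices have unit norm, yielding $\norm{P\Delta Q}_1 \leq \norm{P}_1\norm{\Delta}_1\norm{Q}_1 = \norm{\Delta}_1$. The additional elementary expansion and the remark about column- versus row-stochastic conventions are correct but not needed beyond what the paper already does.
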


\begin{proof}
Note that $\norm{\cdot}_1$ is an induced norm and hence is sub-multiplicative. 
In addition, we have $\norm{P}_1 = \norm{Q}_1 = 1$ since they are left stochastic matrices. 
We have 
\begin{align*}
\norm{P \Delta Q}_1
 \leq
\norm{P \Delta}_1 
\leq \norm{\Delta}_1.
\end{align*}
\end{proof}

Now for the $k$-th term in inequality \eqref{ineq_each_time_total_transit_diff}, it can be rewritten and bounded as 
\begin{align*}
\norm{
\rbr{\PP^\pi }^{k-1} \rbr{
\PP^{\tilde{\pi}}_k - \PP^\pi 
}
\prod_{i={k+1}}^{t-1} \PP_i^{\tilde{\pi}}
}_1
\overset{(a)}{\leq} 
\norm{\PP^{\tilde{\pi}}_k - \PP^\pi }_1
\overset{(b)}{\leq} L_\pi \epsilon.
\end{align*}
where the inequality $(a)$ follows from Lemma \ref{lemma_matrix_norm_bound};
and $(b)$ use the following fact
\begin{align*}
   \tsum_{s' \in \cS} \abs{\PP_k^{\tilde{\pi}}(s', s) - \PP^\pi(s',s)  }
    & =\tsum_{s' \in \cS} \abs{
    \tsum_{a \in \cA} \rbr{ \tilde{\pi}_k(a|s) - \pi(a|s)} \PP(s'|s,a) 
    } \\
    & \leq 
    \tsum_{a \in \cA} \abs{ \tilde{\pi}_k(a|s) - \pi(a|s)}  \tsum_{s' \in \cS}
 \PP(s'|s,a) \\
 & = \norm{\tilde{\pi}_k(\cdot |s) - \pi(\cdot|s)}_1
 \leq L_\pi \epsilon,
\end{align*}
together with the definition of matrix $\norm{\cdot}_1$-norm.
Thus we obtain 
\begin{align}\label{each_step_transit_diff_simplified}
\norm{ \PP^{\tilde{\pi}, t} -  \PP^{\pi, t}}_1
\leq t L_\pi \epsilon.
\end{align}

Given \eqref{each_step_transit_diff_simplified}, we can further obtain that
\begin{align}
   &  \abs{
    \tsum_{t=0}^\infty \gamma^t  \tsum_{s\in \cS}
    \PP^\pi(s_t = s| s_0 = \overline{s}) \tsum_{a\in \cA} \pi(a|s) r(s,a)
    - 
    \tsum_{t=0}^\infty \gamma^t \tsum_{s \in \cS}
    \PP^{\tilde{\pi}}(s_t = s| s_0 = \overline{s}) \tsum_{a \in \cA} \pi(a|s) r(s,a)
    }  \nonumber\\
    \leq &
   \tsum_{t=0}^\infty \gamma^t
   \norm{ \PP^{\pi, t}
   -
    \PP^{\tilde{\pi}, t} }_1 \nonumber \\
    \leq & 
    \tsum_{t= 0}^\infty \gamma^t \cdot t L_\pi \epsilon 
    \leq \frac{L_\pi \epsilon}{(1-\gamma)^2}. \label{value_diff_decomp_1}
\end{align}

In addition, it is also clear that 
\begin{align}
\abs{
    \tsum_{t=0}^\infty \gamma^t \tsum_{s \in \cS} \tsum_{a \in \cA}
    \PP^{\tilde{\pi}}(s_t = s| s_0 = \overline{s})  (\pi(a|s) -\tilde{\pi}_t(a|s)) r(s,a)
}
\leq \frac{L_\pi \epsilon}{1-\gamma}. \label{value_diff_decomp_2}
\end{align}
Hence by combining \eqref{value_standard_decomp}, \eqref{value_perturb_decomp}, \eqref{value_diff_decomp_1} and \eqref{value_diff_decomp_2}, we conclude that 
\begin{align*}
   \abs{V^{\pi}(\overline{s}) - V^{\tilde{\pi}}(\overline{s})} 
     \leq \frac{2 L_\pi \epsilon}{(1-\gamma)^2}.
\end{align*}

% where in the last inequality we use the fact that for all $(s, s') \in \cS \times \cS$,
% \begin{align*}
%     \abs{\PP_i^{\tilde{\pi}}(s', s) - \PP^\pi(s',s)  }
%     = \abs{
%     \tsum_{a \in \cA} \rbr{ \tilde{\pi}_i(a|s) - \pi(a|s)} \PP(s'|s,a) 
%     }
%     \leq \sup_{s \in \cS} \abs{\tilde{\pi}_i(a|s) - \pi(a|s)}
%     \leq L_\pi \epsilon,
% \end{align*}
% and the fact that $\rbr{\PP^\pi}^t$ is a stochastic matrix for any $t \geq 1$.
% Combining the above arguments, we can now conclude that
% \begin{align*}
%   & \abs{V^{\pi}(\overline{s}) - V^{\tilde{\pi}}(\overline{s})} \\
%      \leq &
%     \abs{
%     \tsum_{s\in \cS, a\in \cA} \tsum_{t=0}^\infty \gamma^t
%     \PP^\pi(s_t = s| s_0 = \overline{s}) \pi(a|s) r(s,a)
%     - 
%     \tsum_{s\in \cS, a\in \cA} \tsum_{t=0}^\infty \gamma^t
%     \PP^{\tilde{\pi}}(s_t = s| s_0 = \overline{s}) \pi(a|s) r(s,a)
%     } \\
%     \overset{(a)}{\leq} & 
%       \abs{\cS}
%     \tsum_{t=0}^\infty \gamma^t 
%     \rbr{
%     t L_\pi \epsilon + L_\pi \epsilon
%     } \\
%     \lesssim & 
%     \tfrac{  \abs{\cS} L_\pi \epsilon }{1-\gamma},
% \end{align*}
% where $(a)$ uses Holder's inequality.

% \end{proof}

% Similar claim can also be made for the Q-function.

% \begin{theorem}\label{theorem_q_p}
% Under the same conditions in Theorem \ref{theorem_v_p}, we have
% \begin{align}
%   \abs{ Q^{\pi}(s,a) - Q^{\tilde{\pi}}(s,a)}\lesssim \tfrac{\abs{\cS}  R L_\pi \epsilon }{1-\gamma}.
% \end{align}
% \end{theorem}

% \begin{proof}
From the relation between $Q^\pi$ and $V^\pi$, and the above inequality, we have
\begin{align*}
\abs{Q^{\pi}(s,a) - Q^{\tilde{\pi}}(s,a)} &= 
\gamma \tsum_{s' \in \cS} \PP (s' | s,a) \abs{V^{\pi} (s') - V^{\tilde{\pi}}(s')}\leq \tfrac{  2L_\pi \epsilon }{(1-\gamma)^2}.
\end{align*}
\end{proof}

\begin{theorem}[Function approximation with Lipschitz continuity]\label{approx-thm}
Suppose that the target function $f^*$ satisfies
\begin{align*}
f^*\in W^{\alpha,\infty}\left(\Omega\right) \quad \text{and} \quad \|f^*\|_{W^{\alpha,\infty}\left(\Omega\right)}\leq 1
\end{align*}
for some $\alpha\geq 2$. Given a pre-specified approximation error $\epsilon\in(0,1/\sqrt{d}]$, there exists a neural network model $\tilde{f}\in\cF(L,p)$ with $L = \tilde{O}(\log(1/\epsilon))$ and $p= \tilde{O}(\epsilon^{-\frac{d}{\alpha-1}})$, such that
\begin{align*}
\|\tilde{f} - f^* \|_\infty \leq \epsilon\quad\textrm{and}\quad \|\tilde{f}\|_\mathrm{Lip} \leq 1+\sqrt{d}\epsilon,
\end{align*}
where $\tilde{O}$ hides some negligible constants or log factors.
\end{theorem}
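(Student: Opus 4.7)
The plan is to prove Theorem~\ref{approx-thm} by adapting the classical Yarotsky-style construction for approximating Sobolev functions with ReLU networks, while explicitly tracking gradient norms so as to obtain the stated Lipschitz bound. At a high level I would build $\tilde f$ as a sum of locally defined Taylor polynomials, glued together by a piecewise-linear partition of unity, and then realize each multiplicative piece by the standard ReLU simulation of approximate multiplication. The assumption $\|f^*\|_{W^{\alpha,\infty}}\le 1$ with $\alpha\ge 2$ supplies both the local polynomial-error estimate and the base Lipschitz bound $\|\nabla f^*\|_\infty\le 1$ that appears as the leading ``$1$'' in the conclusion.

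Concretely, I would proceed as follows. First, partition $\Omega\subset\RR^d$ into a uniform grid of $N^d$ cubes $\{C_j\}$ of side $h=1/N$, and on each cube let $P_j$ be the Taylor polynomial of $f^*$ at the center $x_j$, truncated at degree $\alpha-1$. Standard estimates in $W^{\alpha,\infty}$ give $\|P_j-f^*\|_{L^\infty(C_j)}\le c\,h^\alpha$ and a companion gradient estimate $\|\nabla P_j-\nabla f^*\|_{L^\infty(C_j)}\le c\,h^{\alpha-1}$. Next, choose trapezoidal hat functions $\{\phi_j\}$ forming a partition of unity supported on cubes of side $2h$ (these are exactly representable by a single ReLU layer), and consider the smooth surrogate $g=\sum_j\phi_j P_j$. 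Because $\sum_j\nabla\phi_j\equiv 0$ on the interior, the cancellation in $\nabla g-\nabla f^*=\sum_j\nabla\phi_j(P_j-f^*)+\phi_j(\nabla P_j-\nabla f^*)$ preserves the local gradient rate. Third, replace each product $\phi_j P_j$ (and each monomial inside $P_j$) with Yarotsky's logarithmic-depth approximate multiplication network at tolerance $\delta$, and sum the results by a final linear layer. This gives a network $\tilde f\in\cF(L,p)$ with $L=\tilde O(\log(1/\delta))$ and $p=\tilde O(N^d\cdot\mathrm{poly}(\alpha))$. Setting $h=\epsilon^{1/\alpha}$, i.e.\ $N\asymp\epsilon^{-1/\alpha}$, and taking $\delta$ polynomially small in $\epsilon$ yields $\|\tilde f-f^*\|_\infty\le\epsilon$ with the claimed depth $L=\tilde O(\log(1/\epsilon))$ and width $p=\tilde O(\epsilon^{-d/(\alpha-1)})$ (the exponent $d/(\alpha-1)$ comes from trading off the cost of the polynomial realization against the $N^d$ patches).

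The Lipschitz bound is the main obstacle and is where I would spend most of the effort. The Lipschitz constant of $\tilde f$ is $\|\nabla\tilde f\|_\infty$, and I would decompose $\nabla\tilde f-\nabla f^*$ into three sources of error: (i) the local Taylor-plus-partition-of-unity error, bounded coordinate-wise by $O(h^{\alpha-1})$; (ii) the per-layer perturbation introduced by the approximate-multiplication modules, which propagate through compositions and must be tracked via the product rule applied to each approximate product $\widehat{uv}=uv+O(\delta)$ with gradient perturbation $O(\delta/h^{?})$ that depends on how the scaling of inputs is controlled; and (iii) boundary artifacts at the cube interfaces, which are absorbed by the overlap of the hats. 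The critical step is to tighten the internal tolerance $\delta$ (paying at most extra logarithmic factors in depth) so that contributions (ii) and (iii) are pushed below $\sqrt d\,\epsilon$, and to argue that (i) can likewise be improved to $O(\epsilon)$ in each coordinate by, e.g., slightly over-resolving the grid or using a local polynomial one degree higher than required for the value bound. Converting the componentwise bound on $\nabla\tilde f-\nabla f^*$ to an $\ell_2$-Lipschitz bound introduces the $\sqrt d$ factor, and combining with $\|\nabla f^*\|_\infty\le 1$ yields $\|\tilde f\|_{\mathrm{Lip}}\le 1+\sqrt d\,\epsilon$. The delicate part is really the bookkeeping: ensuring that the carefully balanced choice of $h$, $\delta$, and the polynomial-realization depth simultaneously achieves the value accuracy, the width/depth budget, and the sharpened Lipschitz error, without any of the three constants blowing up.
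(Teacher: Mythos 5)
Your construction is a sound route to the result, but it is a genuinely different one from the paper's. The paper does not build the network at all: it invokes the $W^{1,\infty}$ approximation theorem of G\"uhring, Kutyniok and Petersen (2020) as a black box, which directly supplies $\tilde f \in \cF(L,p)$ with $\|\tilde f - f^*\|_{W^{1,\infty}(\Omega)} \le \epsilon$ at the stated depth and width, and then the entire remaining argument is the one-line triangle inequality
$\|\tilde f\|_{\mathrm{Lip}} \le \sup_\Omega \|\nabla f^*\|_2 + \sqrt{d}\,\|\nabla \tilde f - \nabla f^*\|_\infty \le 1 + \sqrt{d}\,\epsilon$,
using $\|f^*\|_{W^{\alpha,\infty}} \le 1$ for the leading term (and noting $\nabla\tilde f$ is defined off a null set despite the ReLU activation). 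Your proposal essentially re-derives the cited black box: the partition of unity, local Taylor polynomials, approximate-multiplication modules, and the cancellation $\sum_j \nabla\phi_j \equiv 0$ are exactly the ingredients of that reference's proof, and your final step (componentwise gradient error $\to$ $\ell_2$ Lipschitz bound via the $\sqrt d$ factor) coincides with the paper's only substantive computation. What your route buys is self-containedness and visibility into where the exponent $d/(\alpha-1)$ comes from; what it costs is all the bookkeeping you correctly identify as delicate. One internal inconsistency to fix if you carry this out: you set $h = \epsilon^{1/\alpha}$, which gives $N^d = \epsilon^{-d/\alpha}$ patches and only an $O(h^{\alpha-1}) = O(\epsilon^{1-1/\alpha})$ gradient error, not the $O(\epsilon)$ you need for the appendix version of the bound. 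The resolution is to calibrate the grid to the gradient requirement from the outset, $h \asymp \epsilon^{1/(\alpha-1)}$; this simultaneously yields $\|\tilde f - f^*\|_\infty \lesssim h^\alpha \le \epsilon$, the gradient accuracy $O(h^{\alpha-1}) = O(\epsilon)$, and the claimed width $N^d = \tilde{O}(\epsilon^{-d/(\alpha-1)})$, so no separate ``over-resolving'' step is needed.
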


\begin{proof}
Theorem \ref{approx-thm} can be proved based on \cite{guhring2020error}, where under the same condition, they show
\begin{align*}
\norm{\tilde{f}-f}_{W^{1,\infty}(\Omega)} \leq\epsilon.
\end{align*}
Since $f^*\in W^{\alpha,\infty}$ and $\|f^*\|_{W^{\alpha,\infty}\left(\Omega\right)}\leq 1$, we have
\begin{align*}
\norm{\nabla f^*}_2\leq 1\quad\textrm{and}\quad \norm{\nabla \tilde{f}-\nabla f^*}_{\infty}\leq\epsilon,
\end{align*}
Note that though $\tilde{f}$ is using a ReLU activation, $\nabla f$ is well-defined except a measure zero set. Eventually, we obtain
\begin{align*}
\|\tilde{f}\|_\mathrm{Lip}\leq \sup_{\Omega}\norm{\nabla \tilde{f}}_2 \leq  \sup_{\Omega}\norm{\nabla f^* + \nabla \tilde{f} -\nabla f^*}_2 \leq  \sup_{\Omega}\norm{\nabla f^*}_2 + \sqrt{d}\norm{\nabla \tilde{f}-\nabla f^*}_\infty\leq 1+\sqrt{d}\epsilon\leq 2.
\end{align*}
\end{proof}

\section{ERNIE for Mean-Field MARL}
\label{app:wass}
As mentioned in \ref{sec:mean-field}, to learn robust policies we aim to use the regularizer

\begin{align*}
    R^Q_\cW( s; \theta) =
    \mathop{\text{max}}_{\cW(d'_s, d_s)\leq\epsilon} \sum_{a\in \cA} \norm{Q_\theta(s, d'_s, a) - Q_\theta(s, d_s, a)}_2^2.
\end{align*}
However, this optimization problem is difficult to optimize due to the explicit Wasserstein distance constraint. To avoid this computational difficulty, we instead solve the regularized problem

\begin{align*}
% \label{inner-max-wasserstein}
    R^Q_\cW( s; \theta) =
    \mathop{\text{max}} \tsum_{a\in \cA} \norm{Q_\theta(s, d'_s, a) - Q_\theta(s, d_s, a)}_2^2 - \lambda_{\cW} \cW(d'_s, d_s).
\end{align*}

The Wasserstein distance term can be computed using IPOT methods with little added computational cost, and we can therefore use this regularizer in a similar manner to the original ERNIE regularizer \citep{xie2020fast}.

\section{Traffic Light Control Implementation Details}
\label{app:traffic}
In our experiments we train four agents in a two by two grid. The length of each road segment is 400 meters and cars enter through each in-flowing lane at a rate of 700 car/hour. The control frequency is 1 Hz, i.e. we need to input an action every second. The reward is based on the following attributes for each agent $n$:
\begin{itemize}
    \item $q^n$: The sum of queue length in all incoming lanes.
    \item $wt^n$: Sum of vehicle waiting time in all incoming lanes.
    \item $dl^n$: The sum of the delay of all vehicles in the incoming lanes.
    \item $em^n$: The number of emergency stops by vehicles in all incoming lanes.
    \item $fl^n$: A Boolean variable indicating whether or not the light phase changed.
    \item $vl^n$: The number of vehicles that passed through the intersection.
\end{itemize}
We can then define the individual reward as
\begin{equation*}
    R^n = -0.5 q^n - 0.5 wt^n - 0.5 dl^n - 0.25 em^n - fl^n + vl^n.
\end{equation*}

All algorithms have the same training strategy. Each agent is trained for five episodes with 3000 SUMO time steps each. At the beginning of training the agent makes random decisions to populate the road network before training begins. Each algorithm is evaluated for 5000 time steps, where the first 1000 seconds are used to randomly populate the road. For adversarial regularization, we use the $\ell_2$ norm to bound the attacks $\delta$.

\subsection{Evaluation Traffic Flows}
The traffic flows used to evaluate the MARL policies in a different environment are shown in table \ref{tab:traffic-flows}. In each flow the total number of cars is similar to the number of cars in the training environment.

\subsection{Details on Changes to Network Topology}
In addition to evaluating traffic light control MARL algorithms when the traffic pattern/speed changes, we also evaluate said MARL algorithms when the traffic network topology slightly changes. We consider two changes to the traffic topology: we slightly change the length of road segments and we evaluate the agents in a larger grid.

To test performance on a larger grid, we evaluate the trained agents on a four by four and six by six traffic light network. Because we only train four agents, we duplicate the trained agents in order to fill out the grid. In the four by four case, we will have four sets of the originally trained four agents, arranged to cover each of the four two by two grids. This setting is especially relevant to the real world deployment of MARL-controlled traffic lights as directly training on a large network may be computationally infeasible.

\begin{table}[!htb]
  \caption{Evaluation Traffic Flows}
  \centering
  \begin{tabular}{rllll}\toprule
    \textit{Flow Number} & Traffic Flow \\ \midrule
    1 & [1000, 1000, 80, 80, 800, 800, 550, 550] \\
    2 & [1000, 1000, 20, 20, 700, 300, 900, 900] \\
    3 & [700, 700, 70, 700, 1400, 600, 80, 80]   \\
    4 & [1000, 1200, 200, 200, 300, 300, 900, 900] \\ 
    5 & [300, 300, 900, 900, 700, 900, 10, 10]\\ 
    \bottomrule
    \label{tab:traffic-flows}
  \end{tabular}
\end{table}

\subsection{Computing resources}
Experiments were run on Intel Xeon 6154 CPUs and Tesla V100 GPUs.

\subsection{Training Details}
Both the actor and critic functions are parametrized by a three-layer multi-layer perceptron with 256 nodes per hidden layer. We use the ADAM optimizer \citep{kingma2014adam} to update parameters and use a grid search to find $\lambda_Q$ and $\lambda_\pi$.

\section{Additional Results}
\label{app:additional}
In this section we include results that we could not fit in the main paper due to limited space. In particular, we show an evaluation of COMA's robustness with the ERNIE framework and some additional environment changes.

\subsection{Multi-Agent Drone Control}
\label{app:drone}
To evaluate the performance of ERNIE in multi-agent robotics environments, we use the multi-agent drone environment \citep{panerati2021learning}. We find that ERNIE can indeed provide enhanced robustness against input perturbations. The results can be found in Figure \ref{fig:drone}.
\begin{figure*}
    \centering
    \begin{subfigure}{0.3\textwidth}
        \centering
        \includegraphics[height=1.5in]{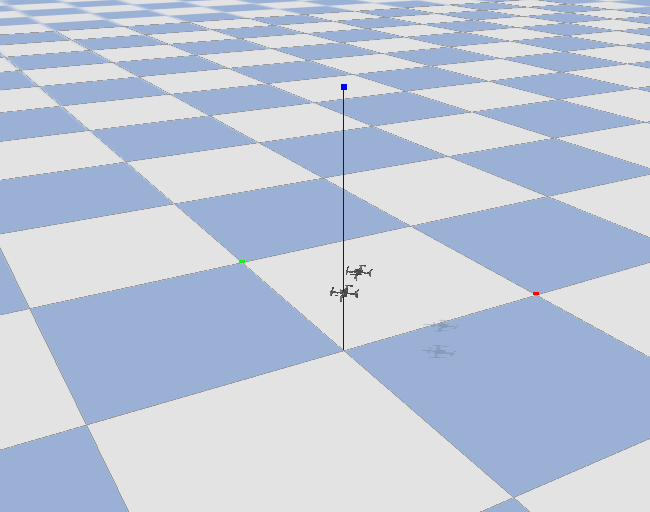}
        \caption{Environment Image}
    \end{subfigure}%
    \begin{subfigure}{0.3\textwidth}
        \centering
        \includegraphics[height=1.5in]{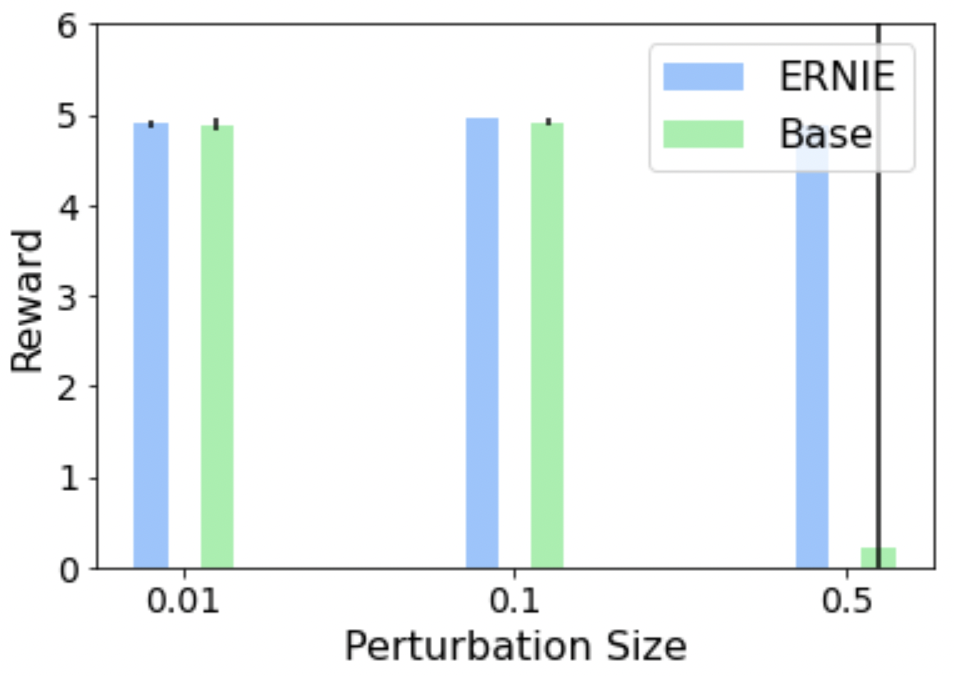}
        \caption{Agent 1 Robustness}
    \end{subfigure}%
    \begin{subfigure}{0.3\textwidth}
        \centering
        \includegraphics[height=1.5in]{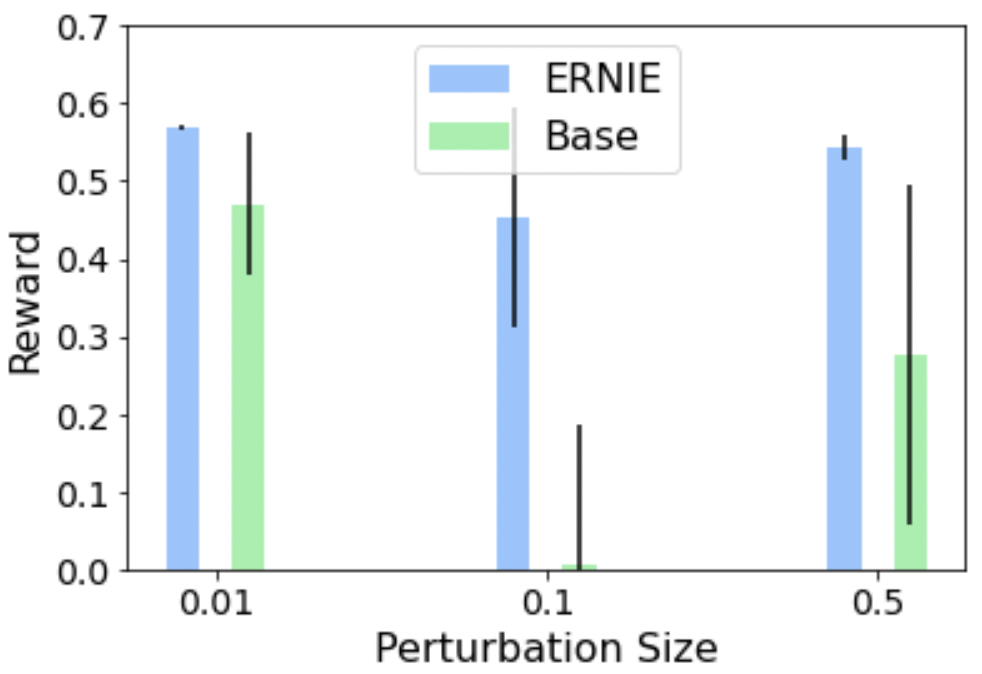}
        \caption{Agent 2 Robustness}
    \end{subfigure}%
    \caption{Evaluation of ERNIE in the multi-agent drone environment (see Figure 3a). The baseline algorithm we use is MAPPO. We then perturb the observation of each of the two agents with Gaussian noise to evaluate robustness (see Figure 3b-c). The task is follow the leader, where the agents have to navigate while remaining close to each other.}
    \label{fig:drone}
\end{figure*}

\subsection{ERNIE for COMA (Traffic Light Control)}
We apply ERNIE to improve the robustness COMA for traffic light control. Figure \ref{fig:COMA} shows the performance of COMA with and without ERNIE on various environment changes. From Figure \ref{fig:COMA} we can see that the ERNIE and ERNIE w/o ST frameworks are able to outperform the baseline in all of the perturbed environments, indicating increased robustness. From table \ref{tab:coma}, we can again see that the ERNIE framework provides increased robustness to every environment change. Interestingly, ERNIE outperforms ERNIE w/0 ST in the training environment and in the setting with small amounts of observation noise (see Figure \ref{fig:COMA}), suggesting that the Stackelberg formulation allows for a better fit to the lightly perturbed data than conventional adversarial training does.  

\begin{figure}[!htb]
\centering
     \begin{subfigure}{0.4\textwidth}
         \centering
         \includegraphics[width=\textwidth]{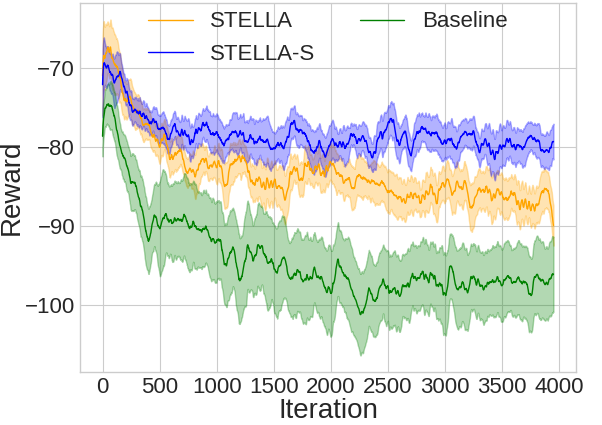}
         \caption{Different Speed (40 m/s)}
     \end{subfigure}
     \begin{subfigure}{0.4\textwidth}
         \centering
         \includegraphics[width=\textwidth]{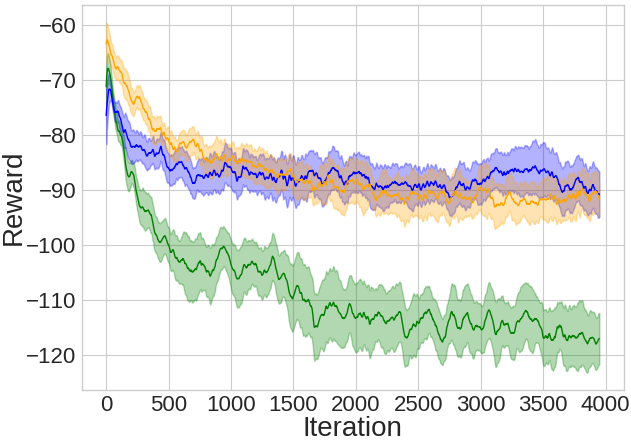}
         \caption{Gaussian Noise (0.01)}
     \end{subfigure}
    \begin{subfigure}{0.4\textwidth}
         \centering
         \includegraphics[width=\textwidth]{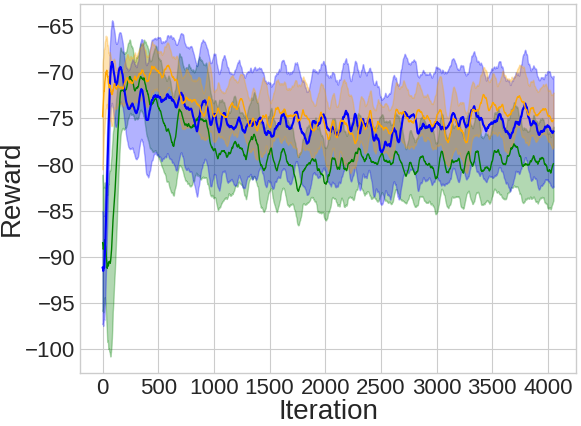}
         \caption{Different Traffic Flow (Flow-3)}
     \end{subfigure}
     \begin{subfigure}{0.4\textwidth}
         \centering
         \includegraphics[width=\textwidth]{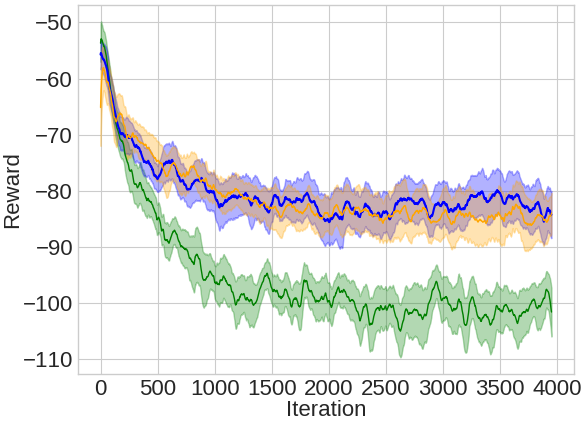}
         \caption{Irregular Grid Topology (COMA)}
     \end{subfigure}
     \begin{subfigure}{0.42\textwidth}
         \centering
         \includegraphics[width=\textwidth]{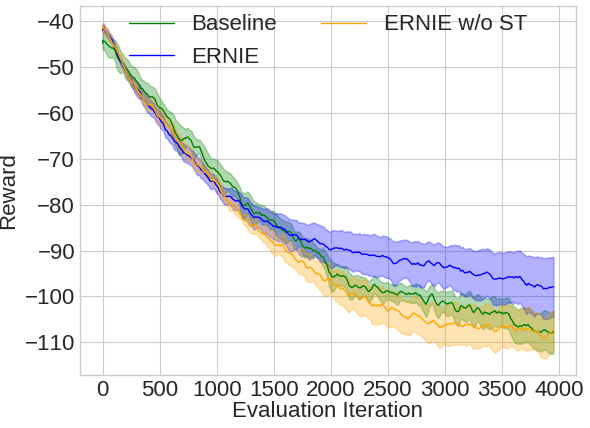}
         \caption{Larger Grid Topology (COMA)}
     \end{subfigure}
    \caption{Evaluation curves from COMA on different environment changes for traffic light control.}
    \label{fig:COMA}
\end{figure}

\begin{table*}[!htb]
\centering
  \caption{Evaluation rewards and standard deviation  for the traffic light control task under different environment perturbations. The baseline algorithm is COMA.}
  \label{tab:coma}
  \begin{tabular}{rccccc}\toprule
    \textit{Algorithm} & \textit{Train} & \textit{Obs. Noise (0.1)} & \textit{Obs. Noise (1.0)} & \textit{Speed (30 m/s)} \\ \midrule
    ERNIE & $\mathbf{-78.39(3.12)}$ & $-86.37(7.1)$ & $-102.45(3.45)$ & $-91.73(6.84)$ \\
    ERNIE w/o ST & $-83.07(3.52)$ & $\mathbf{-84.66(6.15)}$ & $\mathbf{-101.37(3.05)}$ & $\mathbf{-91.69(8.82)}$ \\
    Baseline & $-93.97(7.34)$& $-108.05(8.68)$ & $-113.24(5.82)$ & $-108.45(3.98)$\\
    \bottomrule
    \label{tab:coma}
  \end{tabular}
\end{table*}

\subsection{Additional Results on Changed Networks}
In addition to evaluating the performance of ERNIE and the baseline algorithms on the $4 \times 4$ network, we evaluate the performance of these algorithms on a $6 \times 6$ network. The results shown in table \ref{tab:large-eval} shows that ERNIE and ERNIE-S again outperform the baseline algorithm in the changed environment, indicating increased robustness.

\begin{table}[htb!]
    \centering
  \caption{Evaluation rewards and standard deviations on larger networks.}
  \begin{tabular}{rcc}\toprule
    \textit{Algorithm} & $\mathit{4 \times 4}$ & $\mathit{6 \times 6}$ \\ \midrule
    Baseline (QCOMBO) & $-401.64(22.25)$ & $-320.66(40.80)$ \\
    ERNIE w/o ST & $-221.24(13.88)$ & $-213.20(14.04)$ \\
    ERNIE & $\mathbf{-217.21(8.36)}$ & $\mathbf{-152.60(3.91)}$ \\ 
    \hline
    Baseline (COMA) & $-384.17$ &$-330.55(5.70)$   \\
    ERNIE  & $-394.14(1.29)$ &$-337.25(3.86)$    \\ 
    ERNIE w/o ST & $\mathbf{-369.40(6.04)}$ & $\mathbf{-319.16(3.95)}$ \\ 
    \bottomrule
    \label{tab:large-eval}
  \end{tabular}
\end{table}

We also evaluate the performance of ERNIE in another irregular traffic network from Atlanta. This grid can be see in Figure \ref{fig:atl}, and the performance of ERNIE and the baselines can be seen in table \ref{tab:real}. As with the other environment changes, we can see that the ERNIE framework exhibits increased robustness over the baseline algorithms.
\begin{figure}
    \centering
    \includegraphics[width=0.3\textwidth]{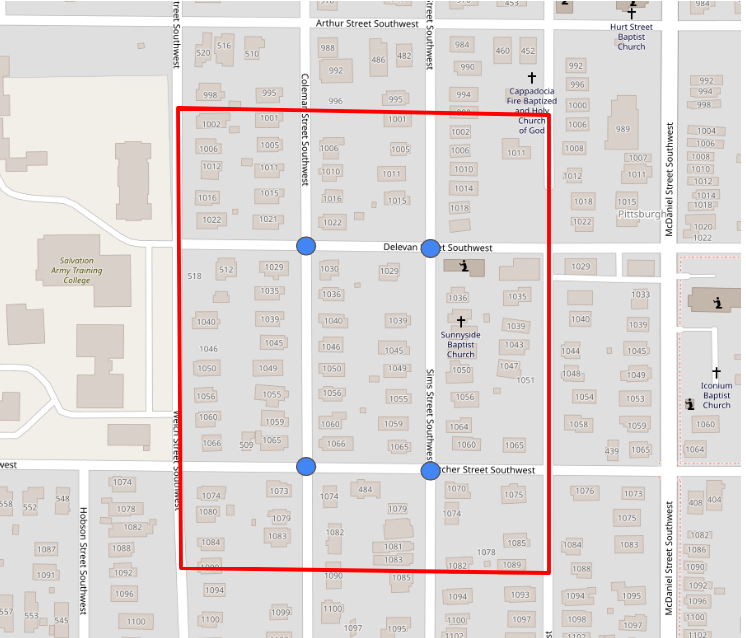}
    \caption{Irregular $2 \times 2$ traffic network from Atlanta.}
    \label{fig:atl}
\end{figure}

\begin{table}[htb!]
\centering
  \caption{Evaluation rewards and standard deviation on irregular networks}
  \label{tab:real}
  \begin{tabular}{rc}\toprule
    \textit{Algorithm} & \textit{Atlanta}  \\ \midrule
    Baseline (QCOMBO) &  $-435.69(27.09)$ \\
    ERNIE w/o ST & $-339.48(28.98)$   \\
    ERNIE  & $\mathbf{-285.84(28.44)}$ \\
    \hline
    Baseline (COMA) &  $-477.54(4.41)$   \\
    ERNIE w/o ST & $\mathbf{-402.12(5.67)}$ \\ 
    ERNIE & $-432.87(5.43)$  \\ 
    \bottomrule
  \end{tabular}
\end{table}

\subsection{Additional Ablation Experiments}
\label{app:able}
To further verify the effectiveness of the Stackelberg reformulation of adversarial regularization, we compare the performance of ERNIE with and without ST (Stackleberg Training) in the particle environments. The results are shown in Figure \ref{fig:particle-able}, where we can see that the Stackelberg formulation performs better or equivalently to normal adversarial regularization in all settings.

\begin{figure*}[htb!]
\centering
     \begin{subfigure}{0.22\textwidth}
         \centering
         \includegraphics[width=\textwidth]{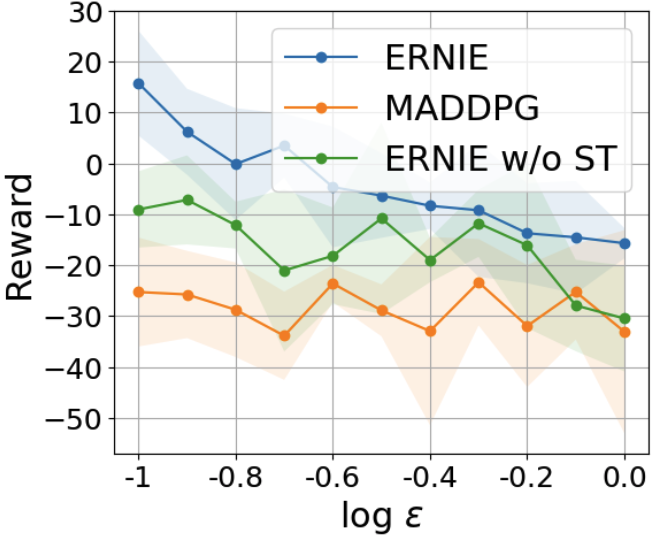}
         \caption{Covert Comm.}
     \end{subfigure}
     \hspace{0.001\textwidth}
     \begin{subfigure}{0.215\textwidth}
         \centering
         \includegraphics[width=\textwidth]{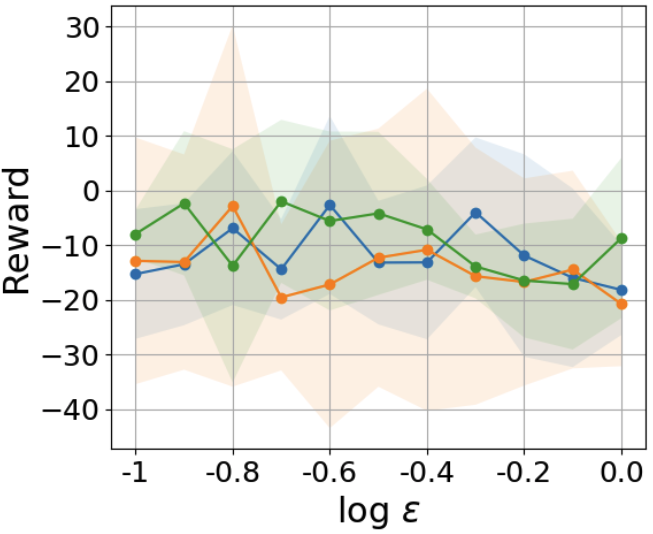}
           \caption{Tag}
     \end{subfigure}
     \hspace{0.001\textwidth}
    \begin{subfigure}{0.22\textwidth}
         \centering         \includegraphics[width=\textwidth]{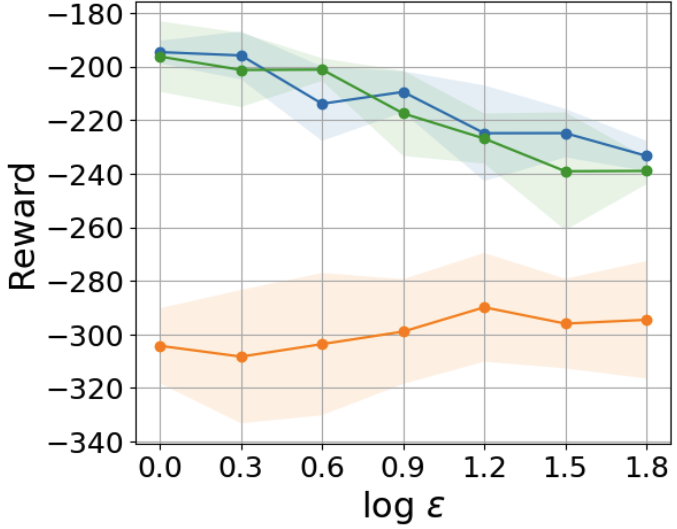}
           \caption{Navigation}
     \end{subfigure}
     \hspace{0.001\textwidth}
    \begin{subfigure}{0.225\textwidth}
         \centering         \includegraphics[width=\textwidth]{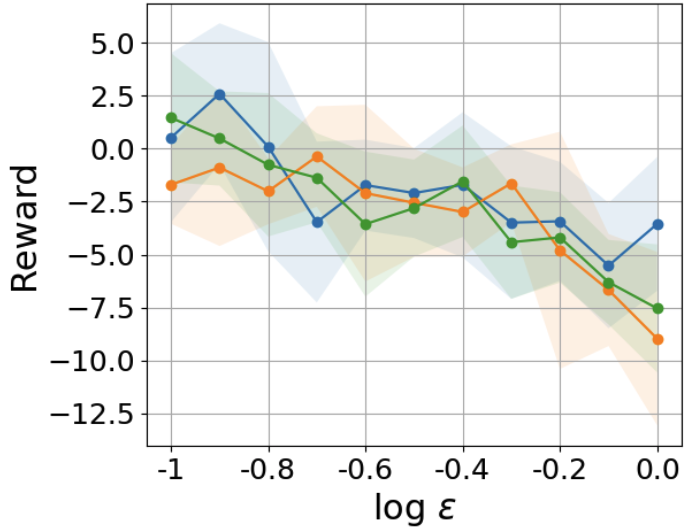}
           \caption{Predator Prey}
     \end{subfigure}
    \caption{Ablation study comparing ERNIE with and without Stackelberg Training (ST).}
    \label{fig:particle-able}
    %\vspace{-0.1in}
\end{figure*}

\subsection{Time Comparison}
In the cooperative navigation environment with 3 agents, we find that the baseline
MADDPG takes 1.127 seconds for 50 episodes, ERNIE takes 1.829 seconds, and M3DDPG takes 3.250 seconds.
Although ERNIE is more expensive than vanilla training, it is significantly more efficient than competitive baselines.

\section{Baseline Algorithms}
\label{app:baselines}
In this section we describe the baseline algorithms in detail.

\subsection{QCOMBO}
QCOMBO \citep{zhang2019integrating} is a Q-learning based MARL algorithm that couples independent and centralized learning with a novel regularization method. QCOMBO consists of three components, an {\it individual part}, a {\it global part}, and a {\it consistency regularization}. 
% We parameterize the local utility function of agent $n$ by a neural network with $\theta_n$.
The individual part consists of  Q-learning for each agent 
\begin{equation*}
    \mathcal{L}(\theta) = \frac{1}{N} \sum_{n=1}^N \mathbb{E} \left[\frac{1}{2} (y^n_t - Q^n(o^n_t, a^n_t; \theta^n))^2\right],
\end{equation*}
where
$
    y^n_t = r^n_t + \gamma \mathop{\text{max}}_{\hat a^n} Q^n(o^n_{t+1}, \hat a^n; \theta^n), ~~ \forall n \in [N] ,
$
and $\theta = [\theta^1, \ldots, \theta^n]$ denotes the concatenation of local parameters.

The global part consists of a global Q-network that learns a global Q function. 
We parameterize the global Q-function by $\omega$, and minimize the approximate Bellman residual
\begin{align}
\label{bellman}
    \mathcal{L}(\omega) = \mathbb{E} \big[\frac{1}{2} (y_t - Q(s_t, \mathbf{a}_t; \omega))^2\big],
\end{align}
where $y_t = r^g_t + \gamma  Q(s_{t+1}, \mathbf{a}'_t;\omega)$ and 
$\ab'_t = (a_1^t, \ldots, a_N^t),~a_n^t \in \argmax_{\hat{a}^n \in A^n } Q^n(o^n_{t+1}, \hat a^n; \theta^n).$
% Note that  argmax in \eqref{eq:local_target} is taken over the individual utility functions. This allows the global Q-function to gather the information from local utility functions, while avoiding computing the maximum over the exponentially large joint action space.
Finally a consistency regularization 
$$
    \mathcal{L}_{\mathrm{reg}}(\omega, \theta) = \mathbb{E} \big[\frac{1}{2} (Q(s, \mathbf{a}; \omega) -\sum_{n=1}^N Q^n(o^n, a^n; \theta^n))^2\big]
$$
ensures that global and individual utility functions are similar, to encourage cooperation. The complete QCOMBO loss is then given by
\begin{equation*}
    \mathcal{L}_{\mathrm{QC}}(\omega, \theta) = \mathcal{L}(\omega) + \mathcal{L}(\theta) + \lambda_Q \mathcal{L}_{\text{reg}}(\omega, \theta).
\end{equation*}
Here $\lambda_Q$ is a hyperparameter that can be tuned. In execution decisions are made with the individual utility functions,
$\{Q^n\}_{n=1}^N$. In practice we apply ERNIE to the individual Q-functions $Q^n$.

\subsection{MADDPG}
MADDPG is a multi-agent version of Deep Deterministic Policy Gradient (DDPG). DDPG uses the actor-critic architecture where a state-action value function $Q_\phi$ is used to update a deterministic policy $\mu_\theta$. The state-action value function is updated to minimize the squared bellman loss 
\begin{equation*}
    \cL(\phi) = \mathbb{E}_{s_t \sim \rho}[(Q_\phi(s_t, a_t) - y_t)^2]
\end{equation*}
where $y_t = r_t + Q'(s_{t+1}, \mu_\theta'(s_{t+1}))$ and $Q_\phi'(\cdot), \mu_\theta'(\cdot)$ are target networks. The policy function is updated with the policy gradient taking the form
\begin{equation*}
    \mathbb{E}_{s_t \sim \rho} \big[ \nabla Q_\phi(s_t, a_t)|_{a_t=\mu_\theta(s_t)} \nabla \mu_\theta(s_t)\big].
\end{equation*}
The target networks are gradually updated throughout training to track the actor and critic networks.

MADDPG extends DDPG to the multi-agent setting with the paradigm of centralized training with decentralized execution. In particular, MADDPG employs a centralized state-action value function $Q_\phi$ and independent actor functions $\{\mu_{\theta_1}, ..., \mu_{\theta_N}\}$. Denoting $\mathbf{a_t}$ as the joint action of the agents at time $t$, the state-action value function is updated to minimize the squared bellman loss
\begin{equation*}
    \cL(\phi) = \mathbb{E}_{s_t \sim \rho}[(Q_\phi(s_t, \mathbf{a_t}) - y_t)^2]
\end{equation*}
where $y_t = r_t + Q'(s_{t+1}, \mu_{\theta_1}'(o_{1,t+1}), ..., \mu_{\theta_N}'(o_{N,t+1}))$ and $Q_\phi'(\cdot), \mu_{\theta_1}'(\cdot), ..., \mu_{\theta_N}'(\cdot)$ are target networks. Each policy function $\mu_{\theta_i}$ is updated with the policy gradient
\begin{equation*}
 \mathbb{E}_{s \sim \rho} \big[ \nabla Q_\phi(s, \mathbf{a})|_{\mathbf{a}=\mu_{\theta_1}(o_1), ..., \mu_{\theta_N}(o_N)} \nabla \mu_{\theta_i}(o_i)\big].
\end{equation*}
where $\rho$ is the state visitation distribution. Note that the state-action value function is only used during training and that actions are only taken with the decentralized policy functions. In practice we apply ERNIE to the individual policies $\mu_\theta$.

\subsection{COMA}
COMA is a policy gradient algorithm that
directly seeks to minimize the negative cumulative reward $\cL_{\mathrm{NCR}}$ by 
learning
$\{\pi_n\}_{n=1}^N$ parametrized by $\theta = \cbr{\theta_n}_{n=1}^N$ with the actor-critic training paradigm. 
% learns a centralized critic $Q(s, \mathbf{a})$ and individual policies $\pi(a^n|o^n)$ parametrized by $\theta$. 
Specifically, COMA updates local policies (actors) with policy gradient
\begin{align}
\label{eq:coma-grad}
    \nabla L_{\mathrm{NCR}}(\theta) = \mathbb{E}_\pi\big[\sum_{n=1}^N \nabla_\theta \text{log } \pi^n(a^n|o^n)A^n(s, \mathbf{a})\big],
\end{align}
where $A^n(s, \mathbf{a})$ is the counterfactual baseline given by 
% COMA uses a counterfactual baseline to compute the advantage function for a single agent's action when the actions of other agents (denoted by $\mathbf{a}^{-n}$) are held fixed. Specifically, the counterfactual baseline for the $n$-th policy $A^n(s, \mathbf{a})$ is calculated with the centralized critic as
% \begin{equation*}
$
    A^n(s, \mathbf{a}) = Q(s, \mathbf{a}) - \sum_{\tilde{a}^n} \pi^n(\tilde{a}^n|o^n) Q(s, (\mathbf{a}^{-n}, \tilde{a}^n)).
    $
% \end{equation*}
% Notably, the COMA critic representation allows for efficient computation of the counterfactual baseline. 
The critic parametrized by $\theta^c$ is with trained with
% \begin{equation*}
$
    \mathcal{L}(\theta^c) =   \mathbb{E}_\pi \big[\frac{1}{2} (y_t - Q_{\theta^c}(s_t, \mathbf{a}_t))^2\big],
    $
% \end{equation*}
where $y^n_t$ is the target value defined in TD($\lambda)$ \cite{sutton2018reinforcement}.
In execution decisions are made with the individual policy functions $\{\pi^n\}_{n=1}^N$. In practice we apply ERNIE to the individual policies $\pi^n$.

\section{Particle Environments Implementation Details}
\label{app:traffic}
For the particle environments task, we follow the implementation of \href{https://github.com/shariqiqbal2810/maddpg-pytorch}{maddpg-pytorch}. For each task we parametrize the policy function with a three layer neural network, with 64 units hidden units. We then train for 25000 epochs (covert communication) 15000 epochs (cooperative navigation and predator prey), or 5000 epochs (tag). As we are considering the cooperative setting, we only apply ERNIE to the cooperating agents. The reward in the perturbed environments is that of the cooperative agents (note that we do not perturb the observations of the opposition agent). For adversarial regularization, we use the $\ell_2$ norm to bound the attacks $\delta$. We use SGD to update parameters and use a grid search to find and $\lambda_\pi$.

\subsection{Mean-Field Implementation}
For our mean-field implementation, we use the implementation of \citet{li2021permutation}. For $N=3,30$ agents we use a batch size of 32. For $N=6, 15$, we use a batch size of 100. We train for 10000 episodes, and use a replay buffer of size 100. All other configurations should be the same as used in \citet{li2021permutation}.

\subsection{M3DDPG Implementation}
We implement our own version of M3DDPG in PyTorch \citep{paszke2019pytorch}, as the original implementation uses Tensorflow \citep{abadi2016tensorflow}. In each setting, we tune the attack steps size $\epsilon \in [1e-5, 1e-2]$.

\section{ERNIE-A}
\label{app:action}
We show our algorithm for solving \eqref{eq:action-reg}. Note that $\mathbf{a'} \cup \mathbf{a}_{i, j}$ refers to the joint action $\mathbf{a}$ where the action of agent $i$ is changed to $j$.

% \begin{algorithm}
% \caption{Algorithm for solving \eqref{eq:action-reg}}\label{alg:action}
% \bf{Data }\KwData{$s, \mathbf{a}, \omega, K$}
% \\
% $\mathbf{a'} \gets \mathbf{a}$\;
% \For{$k\leftarrow 1$ \bf{to} \To $K$}{

%     \hspace{0.35cm}$\mathbf{a}_{temp} \gets \mathbf{a'}$\;
%     \For{$i\leftarrow 1$ \bf{to} \KwTo $N$}{
%         \For{$j\leftarrow 1$ \bf{to} \KwTo $|A|$}{
        
%             \hspace{1.05cm}$\mathbf{a}_{compare} \gets \mathbf{a'} \cup \mathbf{a}_{i, j}$\;
%             \vspace{0.1cm}
%             \If{$||Q(s, \mathbf{a}_{compare}; \omega) - Q(s, \mathbf{a}; \omega)||_2^2 > ||Q(s, \mathbf{a}_{temp}; \omega) - Q(s, \mathbf{a}; \omega)||_2^2$ }{
%             \vspace{0.1cm}
            
%             \hspace{1.4cm}$\mathbf{a}_{temp} \gets \mathbf{a}_{compare}$}
%         }
%     }
%     $\mathbf{a'} \gets \mathbf{a}_{temp}$\;
% }
% \label{alg}
% \end{algorithm}
\begin{figure*}[htb!]
\includegraphics[width=\textwidth]{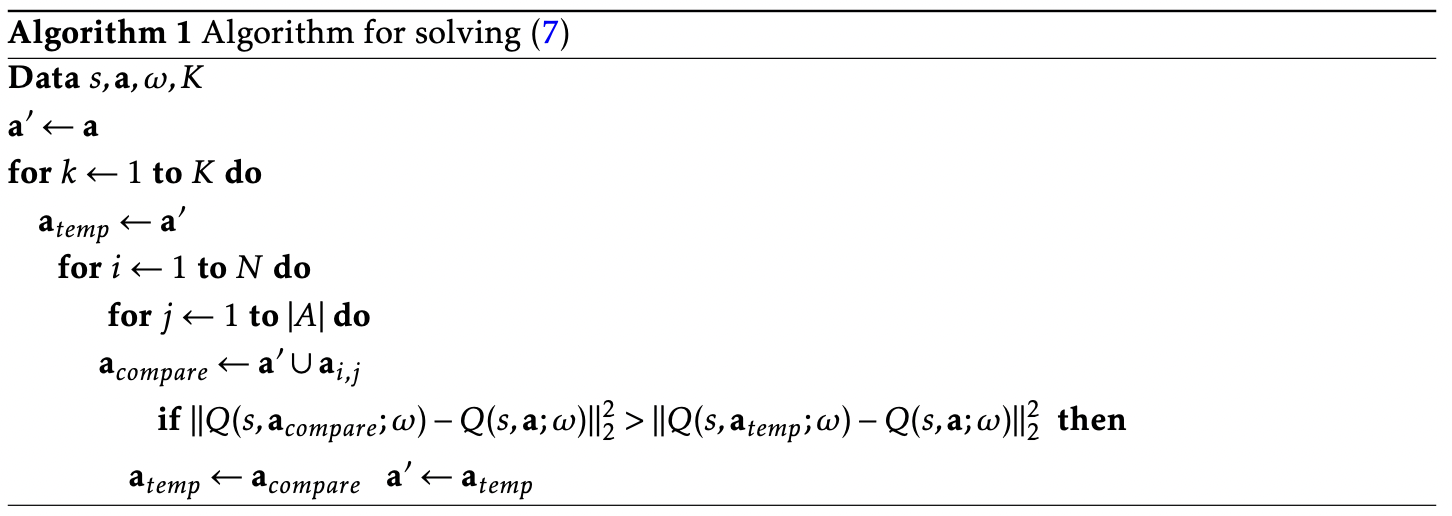}
\captionlistentry{}
\label{alg}
\end{figure*}

\section{Gaussian Baseline}
\label{app:gaussian}
The baseline-Gaussian is similar to ERNIE. However, instead of generating $\delta$ as
\begin{align*}
    \delta = \mathop{\text{argmax}}_{||\delta|| \leq \epsilon} D(\pi_{\theta_k}(o_k + \delta), \pi_{\theta_k}(o_k)),
\end{align*}
$\delta$ is sampled from the standard normal $\mathcal{N}(0, I)$. Similar to ERNIE, this baseline will ensure the policy does not change to much given Gaussian input perturbations. This baseline therefore performs well in several environments, especially those with Gaussian observation noise. However, robustness against Gaussian noise does not ensure robustness against all noise, and the Gaussian baseline may therefore fail in some perturbed environments.

\end{document}